\newtheorem{theorem}{Theorem}
\newtheorem{lemma}{Lemma}
\newtheorem{corollary}{Corollary}
\newtheorem{conjecture}{Conjecture}
\newtheorem{definition}{Definition}
\newenvironment{proof}{{\noindent\it Proof.}\quad}{\hfill $\square$ \par}
\newenvironment{proof_sketch}{{\noindent\it Proof Sketch.}\quad}{\hfill $\square$ \par}
\title{Towards Soft Fairness in Restless Multi-Armed Bandits}
\author{%
  Dexun Li \\
  Singapore Management University\\
  Singapore 178902\\
  \texttt{dexunli.2019@phdcs.smu.edu.sg} \\
   \And
   Pradeep Varakantham \\
   Singapore Management University \\
   Singapore 178902 \\
   \texttt{pradeepv@smu.edu.sg} \\
}
\begin{document}

\maketitle

\begin{abstract}
Restless multi-armed bandits (RMAB) is a framework for allocating limited resources under uncertainty. It is an extremely useful model for monitoring beneficiaries and executing timely interventions to ensure maximum benefit in public health settings (e.g., ensuring patients take medicines in tuberculosis settings, ensuring pregnant mothers listen to automated calls about good pregnancy practices). Due to the limited resources, typically certain communities or regions are starved of interventions that can have follow-on effects. To avoid starvation in the executed interventions across individuals/regions/communities, we first provide a soft fairness constraint and then provide an approach to enforce the soft fairness constraint in RMABs. The soft fairness constraint requires that an algorithm never probabilistically favor one arm over another if the long-term cumulative reward of choosing the latter arm is higher. Our approach incorporates \textit{softmax} based value iteration method in the RMAB setting to design selection algorithms that manage to satisfy the proposed fairness constraint. Our method, referred to as SoftFair, also provides theoretical performance guarantees and is asymptotically optimal. Finally, we demonstrate the utility of our approaches on simulated benchmarks and show that the soft fairness constraint can be handled without a significant sacrifice on value.

\end{abstract}

\section{Introduction}
Restless Multi-Armed Bandit(RMAB) Process is a generalization of the classical Multi-Armed
Bandit (MAB) Process, which has been studied since the 1930s~\cite{katehakis1987multi}. RMAB is a powerful framework for budget-constrained resource allocation tasks in which a decision-maker must select a subset of arms for interventions in each round. Each arm evolves according to an underlying Markov Decision Process (MDP). The overall objective in a RMAB model is to sequentially select arms so as to maximize the expected value of the cumulative rewards collected over all the arms. RMAB is of relevance in public health monitoring scenarios, recommendation systems and many others.
Tracking a patient's health or adherence and intervening at the right time is an ideal problem setting for an RMAB~\cite{akbarzadeh2019restless, bhattacharya2018restless, mate2020collapsing}, where the patient health/adherence state is represented using an arm. Resource limitation constraint in RMAB comes about due to the severely limited availability of healthcare personnel.  By developing practically relevant approaches for solving RMAB within severe resource limitations,  RMAB can assist patients in alleviating health issues such as diabetes~\cite{newman2018community}. hypertension~\cite{brownstein2007effectiveness}, tuberculosis~\cite{chang2013house, ong2014effects}, depression~\cite{lowe2004monitoring, mundorf2018reducing}, etc.

 While Whittle index based approaches~\cite{mate2021efficient,lee2019optimal,li2022efficient} address the RMAB problem with a finite time horizon by providing an asymptotically optimal solution, they are susceptible to starving arms, which can have severe repercussions in public health scenarios. Owing to the deterministic selection strategy of picking arms that provide the maximum benefit, in many problems, only a small set of arms typically get picked. As shown in our experimental analysis, Figure~\ref{fig:distribution} provides one example, where almost 50\% of the arms do not get any interventions using the Whittle index approach. While it is an optimal decision, it should be noted that interventions help educate patients or beneficiaries on potential benefits and starvation of such interventions for many patients can result in a lack of proper understanding of the program and reduce its effectiveness in the long run. Thus, there is a need to not starve arms without significantly sacrificing optimality.

Existing works have proposed different notions of fairness in the context of MAB to prevent starvation by enabling the selection of non-optimal arms.
\citet{li2019combinatorial} study a new Combinatorial Sleeping MAB model with Fairness constraints, called CSMAB-F. Their fairness definition requires algorithm to ensure a minimum selection fraction for each arm.
\citet{patil2020achieving} introduce similar fairness constraints in the stochastic MAB problem, where they use a pre-specified vector to denote the guaranteed number of pulls.
\citet{joseph2016fairness} define fairness as saying that a worse arm should not be picked compared to a better arm, despite the uncertainty on payoffs.
\citet{li2022efficient} form the allocation decision-making problem as the RMAB with fairness constraints, where fairness is defined as a minimum rate at which a
task or resource is assigned to a user.
Since knowing the guaranteed number (or proportion) of pulls is difficult to ascertain {\em a priori}, we generalize on these fairness notions for MAB and the fairness notion introduced by Jabbari {\em et al.}~\cite{jabbari2017fairness} for a reinforcement learning setting. We introduce a {\em soft fairness} constraint for RMABs, that requires that an RMAB algorithm never favor an arm probabilistically over another arm, if the long-term cumulative reward of choosing the latter arm is higher.

In summary, our goal is to compute stochastic policies for selecting arms in finite horizon RMAB, which satisfy the soft fairness constraint. To that end, we make the following contributions:
\begin{itemize}
    \item A practically relevant algorithm called \textit{SoftFair}, that enforces the soft fairness constraint and thereby avoids starvation of interventions for arms. Unlike the well-known Whittle index algorithm, \textit{SoftFair} does not require any indexability assumptions.
    \item Performance bounds and theoretical properties of the \textit{SoftFair} algorithm.
    \item Detailed experimental results which demonstrate that \textit{SoftFair} is competitive with other policies while satisfying the soft fairness constraint.
\end{itemize}

\section{RMAB with Soft Fairness Constraint}
\label{sec:prob}

In this section, we formally introduce the RMAB problem with the soft fairness constraint.
There are $N$ independent arms, each of which evolves according to an associated Markov Decision Process (MDP), characterized by the tuple $\{\mathcal{S}, \mathcal{A}, \mathcal{P}, \mathcal{R}, T, \gamma\}$. $\mathcal{S}$ represents the state space, $\mathcal{A}$ represents the action space, $\mathcal{P}$ represents the transition function, and $\mathcal{R}$ is the reward function that lies within the interval, $[R_{min}, R_{max}]$. $T$ is the horizon in each episode, and $\gamma$ is the discount factor. We use $s_i$ and $a_i$ to denote a state and action for arm $i$, respectively. Let $\mathbf{s}=\{s_1,\dots, s_n \}$ and $\mathbf{a}=\{a_1,\dots, a_n \}$ denote the state vector and action vector of RMAB over all arms, respectively.

A policy $\pi$ maps from the states to a distribution over actions. Particularly, $\pi(\mathbf{s},\mathbf{a})\in [0,1]$ denotes the probability of selecting action $\mathbf{a}$ in the state $\mathbf{s}$ for the RMAB, with $\sum_{\mathbf{a}}\pi(\mathbf{s},\mathbf{a})=1$.
Similar to~\citet{jabbari2017fairness}, we define the fairness using the state-action value function $Q^{\ast}(\mathbf{s},\mathbf{a})$ as follows:
\begin{definition}
(Fairness) A stochastic policy, $\pi$ is fair if for any time step $t\in[T]$, any joint state $\mathbf{s}$ and actions $\mathbf{a}, \mathbf{a}^\prime$, where $\mathbf{a} \neq \mathbf{a}^\prime$:
\begin{equation}\label{eq:fairness}
    \pi_t(\mathbf{s},\mathbf{a}) \geq \pi_t(\mathbf{s}, \mathbf{a}^\prime) \text{ only if } Q^\ast(\mathbf{s},\mathbf{a})\geq Q^\ast(\mathbf{s},\mathbf{a}^\prime)
\end{equation}
\end{definition}
\textbf{\textit{In summary, the goal of a solution approach is to generate a stochastic policy that never prefers one action over another if the cumulative long-term reward of selecting the latter one is higher.}}
The notations that are frequently used in this paper are summarized in Table~\ref{tab:notation}.

In this paper, we specifically consider a discrete RMAB where each arm has two states $\{ 0,1 \}$, and $1 (0)$ represents being in the “good” (“bad”) state, and there is a finite time horizon $T$ for each episode where $T$ is known to the algorithm in advance. At each time step $t\in[T]$, the algorithm can choose $k$ arms to pull according to observed states $\mathbf{s}=\{s_1,\dots,s_n\}$, and all arms undergo an action-dependent Markovian state transition process to a new state $\mathbf{s}^\prime = \{ s_1^\prime ,\dots,s_n^\prime \}$. Each arm independently receives a reward determined by its new state. Specifically, $a_{i,t}=1$ represents the choice to select arm $i\in[n]$ at time step $t\in[T]$ (active action), and $a_{i,t}=0$ represents the decision to be passive for arm $i$ (passive action). Then $\mathbf{s}_t\in\{0,1\}^n$ denotes the vector of states observed at time step $t$, and $\mathbf{a}_t\in \{ 0,1 \}^n$ denotes the vector of actions taken at $t$.  We have
\begin{equation}\label{eq:resource}
    ||\mathbf{a}_t||=k,
\end{equation}
and $ 1\leq k \ll n$ represents the limited resource constraint. $R_t(\mathbf{s}_t, \mathbf{a}_t)=\sum_{i=1}^n R_{i,t}(s_{i,t},a_{i,t})$ is the total reward obtained from RMAB at time step $t$ under the state $\mathbf{s}_t$ and action $\mathbf{a}_t$. We use a simple reward function: $R(s_i,a_i)=s_i^\prime \in\{0,1 \}$ determined by the next state $s_i^\prime$ obtained by taking action $a_i$ in the observed state is $s_i$ for all arms $i\in[n]$, note that the expected immediate reward should be $\mathbb{E}[R(s_i,a_i)]=P_{s_i,1}^{a_i}$.

\begin{table*}[ht]
\caption{Notations}
\resizebox{\linewidth}{!}{$
\begin{tabular}{|l|l|}
\toprule
\multicolumn{1}{|c|}{\textbf{Notation}} & \multicolumn{1}{c|}{\textbf{Description}}                                                                      \\ \toprule
$k,n,T$                                   & $n$:number of all competing arms in RMAB, $k$:number of arms can be selected each round, $T$: time horizon.                               \\ \midrule
$c$                                   & $c$: multiplier parameter.                               \\ \midrule
$s_i,a_i$, $\mathbf{s},\mathbf{a}$                                   & $s_i,a_i$: state and action of arm $i$, $\mathbf{s},\mathbf{a}$: state vector and action vector of RMAB.                               \\ \midrule
$[n],[T]$                               & We use {[}n{]} to represent the set of integers $\{0,\dots,n\}$ for $n\in \mathbb{N}$, so as {[}T{]}.          \\ \midrule
\begin{tabular}[c]{@{}l@{}}$Q_{m,t}(s,a)$,\\ $V_{m,t}(s)$\end{tabular}       &\begin{tabular}[c]{@{}l@{}} $Q_{m,t}(s,a)$: A state-action value function for the subsidy $m$ and state $s$ when taking action $a$ start at time step $t$\\ followed by optimal policy using Whittle index based approach in the future time steps; \\$V_{m,t}(s)$: Value function for the subsidy $m$ and state $s$ start at time step $t$ using Whittle index based approach\end{tabular} \\ \midrule
\begin{tabular}[c]{@{}l@{}}$Q_t(\mathbf{s},\mathbf{a})$,\\ $V_t(\mathbf{s})$\end{tabular} & \begin{tabular}[c]{@{}l@{}}$Q_t(\mathbf{s},\mathbf{a})$: The state-action value function when taking action $\mathbf{a}$ at time step $t$ with state $\mathbf{s}$\\ $V_t(s)$: The value function at the time step $t$ with state $\mathbf{s}$.
\end{tabular}                                                                                                           \\ \bottomrule
\end{tabular}
\label{tab:notation}
$}
\end{table*}


The objective of the algorithm $\pi$ is to efficiently approximate the maximum cumulative long-term reward while satisfying resource constraints and fairness constraints. Towards this end, the reward maximization problem can be formulated as
\begin{equation}
\begin{aligned}
        \underset{\pi}{\text{maximize }} \mathbb{E}_{\pi}[\sum_{t=1}^T  \gamma^{t-1}R_t(\mathbf{s}_t,\mathbf{a}_t|\mathbf{s}_0)] \\
        \text{ such that Equation.~\ref{eq:resource}, and Equation.~\ref{eq:fairness} are satisfied}
\end{aligned}
\end{equation}

In this paper, we consider the problem of interventions for patient adherence behaviors, and we assign same value to the adherence of a given arm/patient over time.

\section{Method}
In this section, we design a probabilistically fair selection algorithm by carefully integrating ideas from value iteration methods with RMAB setting to deal with our objective. We show that our method, called \textit{SoftFair}, is fair under proposed fairness constraints.
\textit{SoftFair} relies on the notion of known next-state distribution. A next-state distribution is defined to be known as the algorithm has the full knowledge of the transition probabilities.
\textit{SoftFair} requires particular care in computing the action probability distributions, and must restrict the set of such policies to balance the fair exploration and fair exploitation polices. Correctly formulating this restriction process to balance fairness and performance relies heavily on the observations about the relationship between fairness and performance.


In order to implement the value iteration methods in the RMAB setting,  \textit{SoftFair} first need to identify the estimated value function of the state of each arm $i\in [n]$ at each time step, and calculate the difference of state-action value function between the active and passive action. Then \textit{SoftFair} maps each arm $i$'s state to an state-specific probability distribution over actions, such that for each time step $t$, $\pi_{t}(\mathbf{s},\mathbf{a}) \geq \pi_{t}(\mathbf{s}, \mathbf{a}^\prime) \text{ only if } Q^\ast(\mathbf{s},\mathbf{a})\geq Q^\ast(\mathbf{s},\mathbf{a}^\prime)$. Providing such decision support with a fairness mindset
can promote acceptability~\cite{rajkomar2018ensuring, kelly2019key}. In the case of beneficiaries, we assume that an arm/patient might consider action/participation fair when participation of a certain patient (i.e., due to receiving an active action) resulted in a greater increase in expected time spent in a adherent state compared to non-participation (i.e., the passive action on the arm/patient).
Finally, We can sample the actions to get the next states $\mathbf{s}^\prime$.
It suffices to consider a single arm process due to strong decomposability of the RMAB, we now give the details on how to construct our \textit{SoftFair} algorithm so as to efficiently approximate our constrained long-term reward maximization objective.

The \textit{SoftFair} method first independently computes the logit value $\lambda_i$ based on the value function for each arm $i\in[n]$ under state $s_{i,t}$ at time step $t$ of episode $ep$, where $a_{i,t} \in\{ 0,1 \}$.
\begin{equation}\label{eq:Q(s,a)}
    \begin{aligned}
    \zeta_{i,t}^{ep}(s_{i,t},a_{i,t}) &= e^{{Q}_{i,t}^{ep}(s_{i,t},a_{i,t})-V_{i,t}^{ep}(s_{i,t})}\\
    Q_{i,t}^{ep}(s_{i,t},a_{i,t}) &= R_{i,t}(s_{i,t},a_{i,t}) + \gamma \underset{s_{i,t+1}^\prime}{\sum}\Pr(s_{i,t+1}^\prime|s_{i,t},a_{i,t})V^{ep}_{i,t+1}(s_{i,t+1}^\prime)\\
    \lambda_{i,t}^{ep} &=\log \zeta_{i,t}^{ep}(s_{i,t},a_{i,t}=1) - \log \zeta_{i,t}^{ep}(s_{i,t},a_{i,t}=0)\\
    \end{aligned}
\end{equation}
Here $V_{i,t}^{ep}(\cdot)$ is the value function of arm $i$ in the observed state of episode $ep$.
Then \textit{SoftFair} maps all arms $i\in[n]$ to probability distribution over actions set $\mathbf{a}_t=\{ a_{1,t}, \dots, a_{n,t}\}$ based on the observed states vector $\mathbf{s}_t= \{s_{1,t},\dots, s_{n,t} \}$. Note that here $||\mathbf{a}_t||=1$ means that only one arm will be selected. We can sample $k$ times without replacement to get $k$ arms to pull, which ensures that we meet the resource constraint as well as the fairness constraint, and then is apprised of the next state $\mathbf{s}^\prime$. More specifically, we employ the \textit{softmax} function to compute the corresponding action probability distribution.
\begin{equation}\label{eq:lambda}
\begin{aligned}
    &\pi^{ep}(\mathbf{s},\mathbf{a}=\mathbb{I}_{\{i\}}) = \textit{softmax}_c ( c\cdot \lambda_i^{ep})=\frac{\exp( c\cdot  \lambda_i^{ep})}{\sum_{i=1}^n \exp( c\cdot \lambda_i^{ep})}
\end{aligned}
\end{equation}
where $\mathbf{a}=\mathbb{I}_{\{i\}}$~\footnote{$\mathbb{I}_{\{i\}}$ is the indicator with value 1 at the $i$th term and value 0 at other places} denotes the action is to select arm $i$ while keeping other arms passive, and $\pi^{ep}(\mathbf{s},\mathbf{a}=\mathbb{I}_{\{i\}})$ denotes the probability that arm $i$ will be selected under state $\mathbf{s}$. $c\in(0,\infty)$ is the multiplier parameter~\footnote{The updation process of our \textit{Softfair} algorithm will converge to the Bellman Equation~\ref{eq:Psi} with an exponential rate in terms of $c$~\cite{song2019revisiting}, and $c$ controls the asymptotic performance~\cite{kozuno2019theoretical}.} that can adjust the gap between the probabilities of choosing an arm. When $c=\infty$, \textit{SoftFair} becomes the standard optimal Bellman operations~\cite{asadi2017alternative} (Refer to Equation~\ref{eq:Psi}). After computing the relative probability that the arm will be selected, we then can derive the probability that the arm $i$ is among the $k$ selected arms, denoted as $\Pr(a_{i,t}=1|\mathbf{s})$~\footnote{This can be
computed through the permutation iteration. Note $\Pr(a_i=1|\mathbf{s})=\textit{softmax}_c ( c\cdot \lambda_i)$ if $k=1$}.
For each arm $i$, the value function $V^{ep}(\cdot)$ update at episode $ep$ can be written as follows, for every $t\in[T]$:
\begin{equation}\label{eq:value}
\resizebox{\linewidth}{!}{$
  V^{ep}_{i,t}(s) =
    \begin{cases}
        \underset{a_{i,t}\in\{0,1\}}{\sum}\Pr(a_{i,t}|\mathbf{s}_t)Q^{ep}(s_{i,t},a_{i,t})=
        \underset{a_{i,t}\in\{0,1\}}{\sum}\Pr(a_{i,t}|\mathbf{s}_t)\underset{s^\prime_{i,t+1}\in \{0,1 \}}{\sum} \Pr(s^\prime_{i,t+1}|s_{i,t},a_{i,t})(R(s_{i,t},a_{i,t})+\gamma  V^{ep-1}_{i,t+1}(s^\prime_{i,t+1})) &\text{ if } (s=s_{i,t})\\
        V^{ep-1}_{i,t}(s) &\text{ otherwise}
        \end{cases}
        $}
\end{equation}
Similarly, we can also rewrite update equation for the state-action value function, we provide it in appendix.
The process of \textit{SoftFair} is summarized in Algorithm~\ref{al:softfair}.

\begin{algorithm}[ht]
\caption{SoftFair Value Iteration (\textit{SoftFair})}
\label{al:softfair}
\SetAlgoLined
\KwIn{Transition matrix $P$, time horizon $T$, set of observed states $\mathbf{s}$, resource constraint $k$,  multiplier parameter $c$, episodes $K$}
    $V_{i,t}(s) \leftarrow 0, \forall s, i, t$\;
\For{episode $ep=1,\dots,K$}{
Initialize $\mathbf{s}_0 = \{s_{1,0},\dots, s_{n,0}\}$

\For{step $t=0,\dots,T$}{
    \For{arm $i=1,\dots, n$}{
    Compute the $Q_{i,t}^{ep}(s_{i,t},a_{i,t})$ and $\lambda_{i,t}^{ep}(s_{i,t},a_{i,t})$ according to Equation.~\ref{eq:Q(s,a)}\;

    }
    Compute the probability $\pi^{ep}$ according to Equation.~\ref{eq:lambda}\;
    Sample $k$ arms and add them into action set\;
    \For{arm $i=1,\dots, n$}{
        Compute the probability that it will be activated $\Pr(a_i=1|\mathbf{s})$\;
        Update the value function $V_{i,t}^ep(s)$ according to Equation.~\ref{eq:value}
    }
    Play the arm in the action set, and observe next state $\mathbf{s}_{t+1}$
}

}
\KwOut{The value function $V_i(s)$ for arm $i\in [n]$}
\end{algorithm}

\section{Analysis of \textit{SoftFair}}

In this section we formally analyze \textit{SoftFair} and associated theoretical supports. We begin by connecting \textit{SoftFair} with the well-known Whittle index algorithm mentioned earlier, and show why the Whittle index approach is not suitable for our case (Fairness constraint and Finite horizon).

\subsection{\textit{SoftFair} \textbf{\textit{vs}}. Whittle index based methods}
Whittle index policy is known to be the asymtotically optimal solution to RMAB under the infinite time horizon. It independently assigns an index for each state of each arm to measure how attractive it is to pull an arm at a particular state.The index is computed using the concept of a "subsidy" $m$, which can be viewed as the opportunity cost of remaining passive, and is rewarded to the algorithm for each arm that is kept passive, in addition to the usual reward. Whittle index for an arm $i$ is defined as the infimum value of subsidy, $m$ that must be offered to the algorithm to make the algorithm indifferent between pulling and not pulling the arm. Consider a single arm $i\in[n]$ where the state is $s_i$.
At each time step $t\in[T]$, let $Q_{m;i,t}(s_i,a_i=0)$ and $Q_{m;i,t}(s_i,a_i=1)$ denote its active and passive state-action value functions under a subsidy $m$, respectively. We drop subscript $i$ when there is no ambiguity, i.e., $Q_{m;i,t}(s_i,a_i=0)=Q_{m,t}(s,a=0)$ and $Q_{m;i,t}(s_i,a_i=1)=Q_{m,t}(s,a=1)$. We can have:

\begin{equation}
\resizebox{\linewidth}{!}{$
\begin{aligned}
    Q_{m,t}(s,a=0) = P_{s,1}^0 + m +
    \gamma (((1-s)P_{0,0}^0+s P_{1,0}^0) V_{m,t+1}(0)+((1-s)P_{0,1}^0+s P_{1,1}^0 )V_{m,t+1}(1)) & \text{ passive}\\
    Q_{m,t}(s,a=1) = P_{s,1}^1 +
    \gamma (((1-s)P_{0,0}^1+s P_{1,0}^1) V_{m,t+1}(0)
    +((1-s)P_{0,1}^1+s P_{1,1}^1 )V_{m,t+1}(1)) & \text{ active}
\end{aligned}
$}
\end{equation}

The value function for the state $s$ is $V_{m,t}(s)=\max\{ Q_{m,t}(s,a=0), Q_{m,t}(s,a=1)\}$. The Whittle index $W(s)$ can be formally written as:
\begin{equation}
    W(s) = \underset{m}{\inf} \left\{ m_t:Q_{m,t}(s,a=0)= Q_{m,t}(s,a=1) \right\}.
\end{equation}
After computing the Whittle index for each arm, a policy $\pi$ will pull those $k$ arms whose current states have the highest indices at each time step. In order to use the Whittle index approach, it need to satisfy a technical condition called \textit{indexability} introduced by~\citet{weber1990index}. The indexability can be expressed in a simple way: Consider an arm with subsidy $m$, the optimal action is passive, then $\forall m^\prime > m$, the optimal action should remain passive. The RMAB is indexable if every arm is indexable.

However, in  the  case  of  interventions  with  regards  to public health,  the Whittle index approach concentrates on  the  beneficiaries who can mostly improve the objective (public health outcomes). This can lead to some beneficiaries never have a chance to get intervention from public health workers, resulting in a bad adherence behavior and henceforth a bad state from where improvements can be small even with intervention and thus never getting selected by the index policy. Refer to Figure~\ref{fig:distribution} to get a better picture of the difference between the Whittle index approach and \textit{SoftFair}. We can see that when using the Threshold Whittle index method proposed by~\citet{mate2020collapsing}, the activation  frequency of the arm is extremely unbalanced, with nearly half  of the  arms  never being selected. Such  starvation  of  interventions may escalate to communities. To avoid such cycle between bad outcomes, the RMAB needs to consider fairness in addition to maximizing cumulative long-term reward when picking arms.

\begin{figure}[ht]
\centering
    \includegraphics[width=0.8\linewidth]{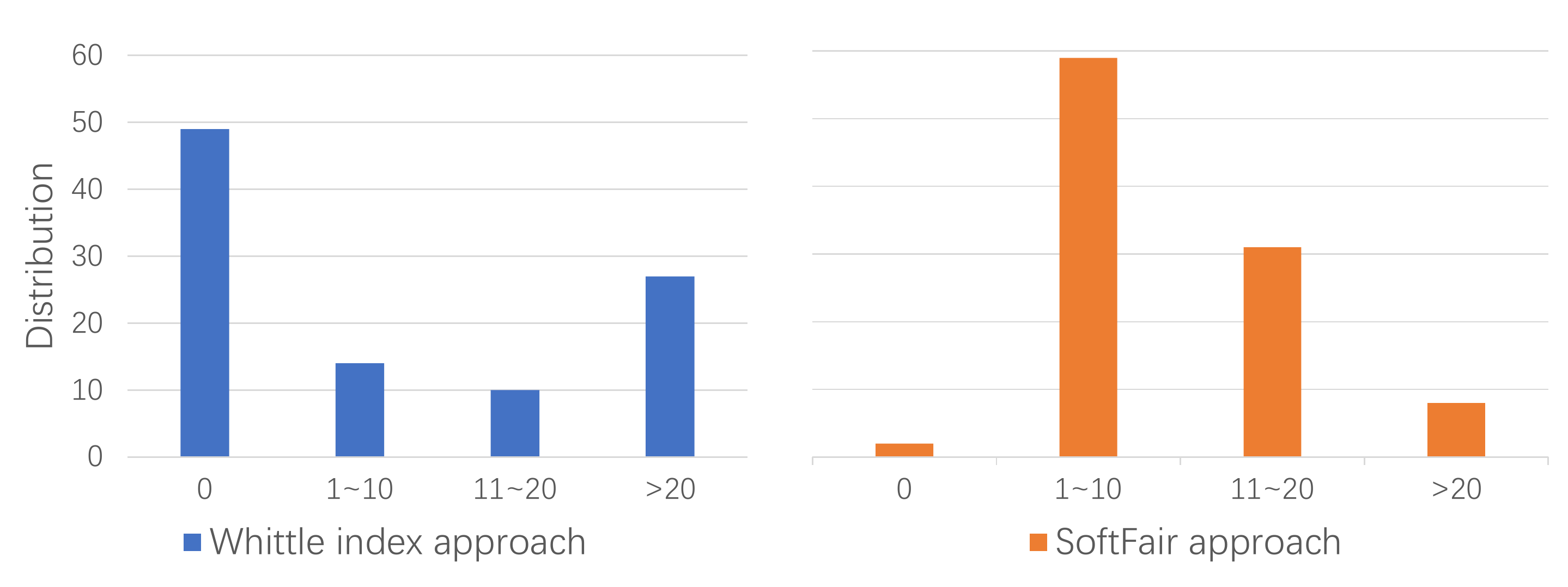}
    \caption{The x-axis is the number of times being selected, and the y-axis is the frequency distribution. We consider the RMAB given in Section~\ref{sec:prob}, with $k=10$, $n=100$, $T=100$. Left: the Whittle index algorithm. Right: \textit{SoftFair} ($c=2$). As can be noted, without fairness constraints in place, the arm activation frequency is lopsided, and almost 50\% of the arms never get activated. }
    \label{fig:distribution}
\end{figure}

However, in addition to failing to meet the system fairness requirements in many real-world applications, traditional Whittle index based approaches also rely on the assumption of an infinite time horizon, and the performance deteriorates severely when time horizons are finite. Often, real-world phenomena are formalized in a finite horizon setting, which prohibits direct use of Whittle index based methods. We now show why the Whittle index based approach can not be applied to the finite time horizon setting. We demonstrate that a phenomenon called Whittle index decay~\cite{mate2021efficient} exists in our problem. All detailed proofs can be found in the Appendix.



\begin{theorem}\label{thm:index_decay}
In the round $t\in[T]$, the Whittle index $m_t$ for arm $i$ under the observed state $s_i$ is the value that satisfies the equation $Q_{m,t}(s_{i,t},a_{i,t}=0)= Q_{m,t}(s_{i,t},a_{i,t}=1)$. The Whittle index will decay as the value of current time step $t$ increases: $\forall t <T: m_t>m_{t+1}\geq m_T = P_{s,1}^1-P_{s,1}^0$.
\end{theorem}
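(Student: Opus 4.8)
The plan is to work directly from the indifference equation and strip off the terminal term by backward recursion on $t$. First I would subtract the two state--action value functions. Since the subsidy $m$ enters only the passive value $Q_{m,t}(s,a=0)$ additively, imposing $Q_{m,t}(s,a=0)=Q_{m,t}(s,a=1)$ and rearranging gives a closed form
\begin{equation*}
m_t = (P_{s,1}^1-P_{s,1}^0) + \gamma\,\big(q_1(s)-q_0(s)\big)\big(V_{m_t,t+1}(1)-V_{m_t,t+1}(0)\big),
\end{equation*}
where $q_a(s)=(1-s)P_{0,1}^a+sP_{1,1}^a$ is the probability of landing in the good state from $s$ under action $a$. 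The terminal case $m_T=P_{s,1}^1-P_{s,1}^0$ then falls out at once: at $t=T$ there is no continuation, so $V_{m,T+1}(\cdot)\equiv 0$ kills the second term. This anchors the base of the claimed chain.

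For the lower bound $m_{t+1}\ge m_T$, I would argue that the correction term $\gamma(q_1(s)-q_0(s))(V_{m,t+1}(1)-V_{m,t+1}(0))$ is non-negative. This rests on two structural facts of the beneficial-intervention model of Section~\ref{sec:prob}: the active action weakly increases the probability of the good state, so $q_1(s)\ge q_0(s)$; and the good state is weakly more valuable, so $V_{m,t+1}(1)\ge V_{m,t+1}(0)$. The latter I would prove by a short backward induction, since the reward $R(s,a)=s'$ pays off the good next-state and the transition kernels preserve the state ordering. Non-negativity of the correction then yields $m_t\ge m_T$ for every $t$.

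The strict decay $m_t>m_{t+1}$ is the crux. Here I would establish a monotonicity lemma: for each fixed subsidy $m$ in the relevant range, the state advantage $D_{m,t}:=V_{m,t}(1)-V_{m,t}(0)$ is strictly decreasing in $t$, equivalently strictly increasing in the number $T-t$ of remaining steps. Intuitively, an extra decision epoch adds a strictly positive discounted increment to the advantage of starting in the good state, because from the good state adherence can be maintained more cheaply over the remaining horizon. Granting the lemma, feeding $D_{m,t+1}>D_{m,t+2}$ together with $q_1(s)-q_0(s)>0$ into the closed form compares $m_t$ with $m_{t+1}$ and forces the strict inequality.

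The main obstacle is the implicit, self-referential dependence of the right-hand side on $m_t$: the continuation values $V_{m_t,t+1}$ are evaluated at the very subsidy being solved for, and the index differs across time steps, so one cannot naively substitute the formula at two different $t$'s. I would resolve this by viewing $g_t(m):=Q_{m,t}(s,a=0)-Q_{m,t}(s,a=1)$ as a function of $m$, showing it is strictly increasing in $m$ (so its root $m_t$ is well defined and order-determining), and then evaluating $g_t$ at $m_{t+1}$ and showing $g_t(m_{t+1})<0$ via the advantage monotonicity lemma; strict monotonicity of $g_t$ in $m$ then delivers $m_t>m_{t+1}$. Verifying that $g_t$ is strictly increasing in $m$ requires controlling how the subsidy propagates through the value functions across the horizon, which is the delicate computation I would fold into the same backward induction.
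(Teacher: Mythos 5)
Your plan follows the same skeleton as the paper's proof — solve the indifference equation for a closed form of $m_t$, anchor at $m_T = P_{s,1}^1-P_{s,1}^0$, then argue monotonicity in $t$ — but it differs in the one place that matters, and for the better. The paper groups the correction term as $\gamma\bigl(V_{t+1}(0)(P_{s,0}^1-P_{s,0}^0)+V_{t+1}(1)(P_{s,1}^1-P_{s,1}^0)\bigr)$ and deduces $\partial m_t/\partial t<0$ from its Lemma~\ref{lem:value_decay}, which only says that each of $V_{m,t}(0)$ and $V_{m,t}(1)$ individually decays in $t$. Since $P_{s,0}^1-P_{s,0}^0=-(P_{s,1}^1-P_{s,1}^0)$, the two coefficients have opposite signs and the two negative derivatives push in opposite directions; the sum equals $\gamma(P_{s,1}^1-P_{s,1}^0)\,\partial\bigl(V_{t+1}(1)-V_{t+1}(0)\bigr)/\partial t$, so what is actually needed is that the \emph{advantage} $D_{m,t}=V_{m,t}(1)-V_{m,t}(0)$ decreases in $t$ — exactly the lemma you isolate by writing the correction as $\gamma(q_1(s)-q_0(s))(V(1)-V(0))$. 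The same remark applies to the lower bound: your justification ($q_1\ge q_0$ and $V(1)\ge V(0)$) is the correct one, whereas the paper's appeal to $V_{m_T,T}(s)\ge 0$ alone does not close that step. You also explicitly confront the implicit dependence of the continuation values on the subsidy being solved for, via monotonicity of $g_t(m)$ in $m$, which the paper passes over silently. The one caveat is that your advantage-monotonicity lemma — $D_{m,t}$ strictly decreasing in $t$, and $V_{m,t}(1)\ge V_{m,t}(0)$ — is asserted with intuition rather than proven; that backward induction (using the structural constraints $P_{0,1}^a<P_{1,1}^a$ to show the kernels preserve the state ordering) is where the real content of the theorem lives, and neither your sketch nor the paper supplies it, so you should carry it out before regarding the argument as complete.
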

\begin{proof_sketch}
We first show a lemma to show value function $V_{m, t}(s_{i,t})>m_{i,t+1}(s_{i,t})\geq 0$, and then we can calculate $m_T$ and $m_{T-1}$ by solving equation $Q_{m,T}(s_{i,T},a_{i,T}=0)=Q_{m,T}(s_{i,T},a_{i,T}=1)$ and $Q_{m,T-1}(s_{i,T-1},a_{i,T-1}=0)=Q_{m,T-1}(s_{i,T-1},a_{i,T-1}=1)$. We then can derive $m_t>m_{t+1}$ by obtaining $\frac{\partial m_t}{\partial t}<0$ for $\forall t>1$ based on the derived lemma. The detailed proof can be found in the appendix.
\end{proof_sketch}

The Whittle index based approach needs to solve the costly finite horizon problem because the index value varies according to the time step even in the same state, and computing the index value under the finite horizon setting is ($O(|S|^kT)$ time and space complexity~\cite{hu2017asymptotically}. However, as an alternative method, our \textit{SoftFair} can naturally approximate the optimal value function at arbitrary time steps while requiring less memory space than model-free learning methods such as Q-learning. We now demonstrate why \textit{SoftFair} can satisfy our proposed fairness constraint while effectively approximating our cumulative reward maximization objective.

\begin{theorem}\label{thm:lambda}
Choose top $k$ arms according to the $\lambda$ value in Equation~\ref{eq:lambda} ($c\rightarrow \infty$) is equivalent to maximize the cumulative long-term reward.
\end{theorem}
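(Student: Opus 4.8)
The theorem asserts that, in the limit $c \to \infty$, selecting the $k$ arms with the largest $\lambda_{i,t}$ values (as defined in Equation~\ref{eq:lambda}) is equivalent to maximizing the cumulative long-term reward. The plan is to unpack the definition of $\lambda_{i,t}^{ep}$ and show that it is precisely the quantity whose ranking determines the reward-maximizing selection under the resource constraint $\|\mathbf{a}_t\| = k$.

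**Main line of argument.**

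First I would recall from Equation~\ref{eq:Q(s,a)} that $\lambda_{i,t}^{ep} = \log\zeta_{i,t}^{ep}(s_{i,t}, a_{i,t}=1) - \log\zeta_{i,t}^{ep}(s_{i,t}, a_{i,t}=0)$, and since $\zeta_{i,t}^{ep}(s,a) = e^{Q_{i,t}^{ep}(s,a) - V_{i,t}^{ep}(s)}$, the value-function terms cancel when taking the difference, so that $\lambda_{i,t}^{ep} = Q_{i,t}^{ep}(s_{i,t}, a_{i,t}=1) - Q_{i,t}^{ep}(s_{i,t}, a_{i,t}=0)$. Thus $\lambda_{i,t}$ is exactly the \emph{advantage} of pulling arm $i$ over leaving it passive at time $t$. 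Next I would use the strong decomposability of the RMAB noted earlier in the paper: because arms evolve independently, the total $Q$-value of a joint action decomposes additively across arms, $Q_t(\mathbf{s}, \mathbf{a}) = \sum_i Q_{i,t}(s_{i,t}, a_{i,t})$. The reward-maximizing joint action subject to $\|\mathbf{a}_t\| = k$ therefore maximizes $\sum_i Q_{i,t}(s_{i,t}, a_{i,t})$ over all action vectors with exactly $k$ active entries.

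The key observation I would make explicit is that this constrained maximization is a selection problem: starting from the all-passive baseline $\sum_i Q_{i,t}(s_{i,t}, 0)$, activating arm $i$ contributes a marginal gain of exactly $Q_{i,t}(s_{i,t},1) - Q_{i,t}(s_{i,t},0) = \lambda_{i,t}$. Since the baseline is a fixed constant independent of which arms we choose, maximizing the total $Q$-value over the $\binom{n}{k}$ feasible selections is equivalent to choosing the $k$ arms with the largest marginal gains $\lambda_{i,t}$. This is the standard greedy argument for a cardinality-constrained sum of independent contributions. I would then connect the $c \to \infty$ limit: as noted after Equation~\ref{eq:lambda}, $\textit{softmax}_c(c\cdot\lambda_i) \to$ the standard $\arg\max$ operator as $c \to \infty$, so in this limit the stochastic selection rule concentrates all probability mass on the arm(s) with the largest $\lambda$ value, recovering deterministic top-$k$ selection.

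**Anticipated obstacles.**

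The routine part is the algebraic cancellation and the additive decomposition. The step that requires genuine care is justifying the additivity $Q_t(\mathbf{s},\mathbf{a}) = \sum_i Q_{i,t}(s_{i,t}, a_{i,t})$ \emph{at the level of the value functions used in SoftFair}, since the per-arm value functions $V_{i,t}^{ep}$ in Equation~\ref{eq:value} are computed using the selection probabilities $\Pr(a_{i,t}|\mathbf{s}_t)$, which couple the arms through the resource constraint; I would need to argue that in the $c\to\infty$ limit this coupling collapses to the deterministic top-$k$ rule so that the decomposition is exact. A secondary subtlety is the sampling-without-replacement interpretation of $\Pr(a_{i,t}=1|\mathbf{s})$ for $k>1$: I would verify that as $c\to\infty$ the without-replacement sampling deterministically returns the top-$k$ arms by $\lambda$, matching the greedy optimum, so that ``choose top $k$ by $\lambda$'' and ``maximize cumulative reward'' coincide. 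The main obstacle, then, is not the ranking argument but ensuring that the limiting selection rule is consistent between the softmax-induced probabilities and the exact $\arg\max$ of the decomposed reward objective.
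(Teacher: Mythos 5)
Your proposal is correct and follows essentially the same route as the paper's proof: cancel the $V_i$ terms to identify $\lambda_i = Q_i(s_i,1)-Q_i(s_i,0)$, use the additive decomposition of the joint $Q$ across arms, and observe that adding the fixed all-passive baseline turns "top-$k$ by $\lambda$" into "maximize $Q(\mathbf{s},\mathbf{a})$ over $k$-subsets," with $c\to\infty$ collapsing the softmax to the deterministic argmax. The coupling issue you flag (per-arm value functions depending on selection probabilities) is a real subtlety that the paper's proof also passes over silently, but it does not change the structure of the argument.
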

\begin{proof_sketch}
We first get the expression of $\sum_{i \in \phi}\lambda_i$ where $\phi$ is the set of selected arms, then we prove that getting the highest value $\sum_{i \in \phi}\lambda_i$ is equivalent to optimal policy.
\end{proof_sketch}

When $c$ approaches infinity, the algorithm becomes the optimal policy which will suffer from the starvation phenomena. Given these facts, $c$ can control the trade-off between the optimal performance and the fairness constraint.

\begin{theorem}\label{thm:soft_fair}
\textit{SoftFair} is fair under our proposed fairness constraint, and $c$ controls the trade-off between fairness and optimal performance.
\end{theorem}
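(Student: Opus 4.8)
The statement has two components: that \textit{SoftFair} meets the fairness constraint of Equation~\ref{eq:fairness} for every finite $c\in(0,\infty)$, and that $c$ governs the trade-off between that fairness and optimality. I would treat them in turn, leaning on Theorem~\ref{thm:lambda}. The first move is to simplify the logit $\lambda_{i,t}^{ep}$ of Equation~\ref{eq:Q(s,a)}: since $\log\zeta_{i,t}^{ep}(s,a)=Q_{i,t}^{ep}(s,a)-V_{i,t}^{ep}(s)$, the state-value term cancels in the difference, leaving $\lambda_{i,t}^{ep}=Q_{i,t}^{ep}(s_{i,t},1)-Q_{i,t}^{ep}(s_{i,t},0)$, the advantage of activating arm $i$. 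The decomposability that Theorem~\ref{thm:lambda} already exploits then relates the joint value to these per-arm quantities: for the single-arm action $\mathbb{I}_{\{i\}}$ one has $Q^\ast(\mathbf{s},\mathbb{I}_{\{i\}})-Q^\ast(\mathbf{s},\mathbb{I}_{\{j\}})=\lambda_i-\lambda_j$, so the ordering of $Q^\ast$ over single-arm actions is exactly the ordering of the $\lambda_i$.

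Because $x\mapsto e^{cx}$ is strictly increasing for $c>0$, the \textit{softmax} map of Equation~\ref{eq:lambda} is order-preserving, whence $\pi^{ep}(\mathbf{s},\mathbb{I}_{\{i\}})\geq\pi^{ep}(\mathbf{s},\mathbb{I}_{\{j\}})$ iff $\lambda_i\geq\lambda_j$ iff $Q^\ast(\mathbf{s},\mathbb{I}_{\{i\}})\geq Q^\ast(\mathbf{s},\mathbb{I}_{\{j\}})$, which is precisely Equation~\ref{eq:fairness} in the $k=1$ case. For $k>1$ the realised policy is the law over $k$-subsets produced by sampling without replacement with these weights, and fairness is read through the marginal inclusion probabilities $\Pr(a_i=1\mid\mathbf{s})$. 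I would establish the monotonicity lemma that $\lambda_i\geq\lambda_j$ implies $\Pr(a_i=1\mid\mathbf{s})\geq\Pr(a_j=1\mid\mathbf{s})$ by a coupling argument: conditioned on the set of arms still available before any draw, arm $i$ is selected with probability at least that of arm $j$ whenever $\lambda_i\geq\lambda_j$, and summing over draws preserves the inequality. This yields fairness for all $k$ and every finite $c>0$, proving the first component.

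For the role of $c$, I would examine the two limits and the monotone interpolation between them. As $c\to\infty$ the \textit{softmax} concentrates on the arms of largest $\lambda_i$, so selection converges to the top-$k$ rule, which by Theorem~\ref{thm:lambda} is reward-optimal — the high-value, high-starvation extreme. As $c\to0$ all weights $e^{c\lambda_i}\to1$ and the policy tends to uniform selection, the maximally diverse, value-agnostic extreme. To show smooth interpolation I would differentiate the one-step expected advantage $\mathbb{E}_{\pi}[\lambda]=\sum_i\pi^{ep}(\mathbf{s},\mathbb{I}_{\{i\}})\lambda_i$ in $c$ and obtain $\frac{d}{dc}\mathbb{E}_{\pi}[\lambda]=\mathrm{Var}_{\pi}[\lambda]\geq0$, so expected value is non-decreasing in $c$ while the entropy of the selection distribution is non-increasing, making $c$ a genuine fairness--performance dial.

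I expect the main obstacle to be the $k>1$ fairness step. Unlike the $k=1$ case, sampling $k$ arms without replacement does not give subset probabilities proportional to $\exp(c\sum_{i\in\phi}\lambda_i)$, so the clean \textit{softmax} monotonicity is unavailable and the coupling argument for the marginal inclusion probabilities must be carried out with care. A secondary subtlety is that the monotonicity of $\mathbb{E}_{\pi}[\lambda]$ in $c$ is a myopic, one-step statement; promoting it to the full finite-horizon value computed by the iteration in Equation~\ref{eq:value} would invoke the convergence and contraction properties of the \textit{softmax} Bellman operator cited after Equation~\ref{eq:lambda}.
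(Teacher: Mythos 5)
Your proposal follows essentially the same route as the paper's proof: reduce $\lambda_i$ to the per-arm advantage $Q_i(s_i,1)-Q_i(s_i,0)$, use the order-preservation of the softmax together with the decomposition from Theorem~\ref{thm:lambda} to match the ordering of selection probabilities with the ordering of the state-action values, and read the role of $c$ off the two limits $c\to\infty$ (top-$k$, optimal) and $c\to 0$ (uniform). The only substantive differences are to your credit: you explicitly isolate the $k>1$ marginal-monotonicity step that the paper merely asserts as ``easy to conclude,'' and your identity $\frac{d}{dc}\mathbb{E}_\pi[\lambda]=\mathrm{Var}_\pi[\lambda]\ge 0$ gives a quantitative version of the trade-off claim that the paper states only qualitatively.
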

\begin{proof_sketch}
The trade-off is governed by $c$, where a large $c$ means that \textit{SoftFair} tends to choose arms with higher value, while a small $c$ means that \textit{SoftFair} tends to ensure fairness among arms.
\end{proof_sketch}

\subsection{Performance bound of \textit{SoftFair}}




We investigate $k=1$ case, since the multi-selection at each time step can be viewed as an iteration. Let $\Psi_{soft}$ denote our \textit{Soft} operator at time step $t\in[T]$, we ignore the subscript $t$ here, which is
\begin{equation}\label{eq:Psi_soft}
\begin{aligned}
     Q^{ep+1}(\mathbf{s},\mathbf{a})&= \Psi_{soft}Q^{ep}(\mathbf{s},\mathbf{a})= \underset{\mathbf{s}^\prime}{\sum}  \Pr(\mathbf{s}^\prime|\mathbf{s},\mathbf{a})(R(\mathbf{s},\mathbf{a})+\gamma  \underset{\mathbf{a}^\prime}{\sum} \Pr(\mathbf{a}^\prime|\mathbf{s}^\prime) Q^{ep}(\mathbf{s}^\prime,\mathbf{a}^\prime))\\
     &= R(\mathbf{s},\mathbf{a})+\gamma \underset{\mathbf{s}^\prime}{\sum}  \Pr(\mathbf{s}^\prime|\mathbf{s},\mathbf{a}) \underset{\mathbf{a}^\prime}{\sum} \Pr(\mathbf{a}^\prime|\mathbf{s}^\prime) Q^{ep}(\mathbf{s}^\prime,\mathbf{a}^\prime)
     \end{aligned}
\end{equation}

Before we derive the performance bound for \textit{SoftFair}, We can first show how to bound the state-action value function in the following lemma.
\begin{lemma}\label{lem:lem}
The state-action value function $Q(\mathbf{s},\mathbf{a})$ is bounded within $[0,\frac{n}{1-\gamma}]$.
\end{lemma}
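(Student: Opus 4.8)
The plan is to prove the two bounds separately, exploiting two structural facts that are immediate from the problem setup: (i) the per-step reward is confined to $[0,n]$, since $R(\mathbf{s},\mathbf{a})=\sum_{i=1}^n R_{i,t}(s_{i,t},a_{i,t})$ and each arm's reward equals its next state $s_i^\prime\in\{0,1\}$ (equivalently, the expected per-arm reward is $P_{s_i,1}^{a_i}\in[0,1]$), so $0\le R(\mathbf{s},\mathbf{a})\le n$; and (ii) both $\Pr(\mathbf{s}^\prime\mid\mathbf{s},\mathbf{a})$ and $\Pr(\mathbf{a}^\prime\mid\mathbf{s}^\prime)$ are genuine probability distributions summing to $1$. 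The cleanest route is to argue that the interval $[0,\frac{n}{1-\gamma}]$ is invariant under the operator $\Psi_{soft}$ of Equation~\ref{eq:Psi_soft} and then pass to the limit, since $Q$ is produced by iterating $\Psi_{soft}$ from $Q^0\equiv 0$.

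For the \emph{upper bound}, I would proceed by induction on the episode index $ep$. The base case $Q^0\equiv 0\le\frac{n}{1-\gamma}$ is trivial. For the inductive step, assume $Q^{ep}(\mathbf{s}^\prime,\mathbf{a}^\prime)\le\frac{n}{1-\gamma}$ for all $(\mathbf{s}^\prime,\mathbf{a}^\prime)$. Because the double sum in $\Psi_{soft}$ is a convex combination of the $Q^{ep}$ values (both nested sums have nonnegative weights summing to $1$ by fact (ii)), it is at most $\max_{\mathbf{s}^\prime,\mathbf{a}^\prime}Q^{ep}(\mathbf{s}^\prime,\mathbf{a}^\prime)\le\frac{n}{1-\gamma}$. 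Combining with $R(\mathbf{s},\mathbf{a})\le n$ from fact (i) gives
\begin{equation}
Q^{ep+1}(\mathbf{s},\mathbf{a}) \le n + \gamma\cdot\frac{n}{1-\gamma} = \frac{n(1-\gamma)+\gamma n}{1-\gamma} = \frac{n}{1-\gamma}.
\end{equation}
Since the interval is closed and every iterate lies in it, the limiting fixed point $Q$ lies in it as well. Equivalently, unrolling the recursion as a discounted sum of per-step rewards each bounded by $n$ gives $Q\le\sum_{t\ge1}\gamma^{t-1}n=\frac{n}{1-\gamma}$, which also covers the finite-horizon case since $n\frac{1-\gamma^T}{1-\gamma}\le\frac{n}{1-\gamma}$.

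The \emph{lower bound} is even simpler: every realized reward $R_{i,t}=s_i^\prime\ge 0$, so $R(\mathbf{s},\mathbf{a})\ge 0$, and the same invariance induction (or the cumulative-reward expression) shows $Q^{ep+1}\ge 0+\gamma\cdot 0=0$, giving $Q\ge 0$. There is essentially no serious obstacle here; the only point requiring care is the inductive step's handling of the \emph{two} nested probability sums in $\Psi_{soft}$ rather than one, where I must verify that $\sum_{\mathbf{a}^\prime}\Pr(\mathbf{a}^\prime\mid\mathbf{s}^\prime)=1$ (the stochastic policy is normalized) so that the whole expression remains a weighted average and the geometric-series bound $n/(1-\gamma)$ is preserved exactly rather than inflated.
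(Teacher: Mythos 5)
Your proof is correct and follows essentially the same route as the paper's: an induction on the episode index showing that $\Psi_{soft}$, being a reward term bounded by $n$ plus $\gamma$ times a convex combination of previous $Q$-values, keeps the iterates within the claimed range, with the geometric series $\sum_t \gamma^t n = \frac{n}{1-\gamma}$ supplying the limit. The only cosmetic difference is that the paper tracks the growing partial sums $\sum_{t=0}^{K}\gamma^t R_{max}$ through the induction and passes to the limit at the end, whereas you establish invariance of the closed interval $[0,\frac{n}{1-\gamma}]$ directly; these are equivalent.
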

\begin{proof_sketch}
The upper bound can be obtained by showing that $\forall (\mathbf{s},\mathbf{a})$, state-action value at the $ep-$th iteration are bounded through induction.
\end{proof_sketch}

\begin{corollary}
As we have $R_{max} =n$ and $R_{min}=0$, we can easily derive that $|Q(\mathbf{s},\mathbf{a})-Q(\mathbf{s},\mathbf{a}^\prime)|\leq \frac{n}{1-\gamma}$, for $\forall Q$ and $\forall \mathbf{s}$.
\end{corollary}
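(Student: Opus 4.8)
The plan is to obtain the corollary as an immediate consequence of Lemma~\ref{lem:lem}, which already establishes the two-sided bound $Q(\mathbf{s},\mathbf{a}) \in [0, \frac{n}{1-\gamma}]$ for every state-action pair. First I would confirm that the stated reward bounds are consistent with the model of Section~\ref{sec:prob}: since the per-step total reward is $R_t(\mathbf{s}_t,\mathbf{a}_t) = \sum_{i=1}^n R_{i,t}(s_{i,t}, a_{i,t})$ with each summand $R_{i,t}=s'_{i,t+1} \in \{0,1\}$, the total reward at any step lies in $[0,n]$, so that indeed $R_{min}=0$ and $R_{max}=n$. This justifies the interval endpoints $0$ and $\frac{n}{1-\gamma}$ carried over from the lemma.

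Next I would fix an arbitrary joint state $\mathbf{s}$ and any two actions $\mathbf{a}, \mathbf{a}'$. By Lemma~\ref{lem:lem}, both $Q(\mathbf{s},\mathbf{a})$ and $Q(\mathbf{s},\mathbf{a}')$ lie in the common interval $[0, \frac{n}{1-\gamma}]$, and the difference of two points in an interval is bounded in absolute value by the width of that interval, here $\frac{n}{1-\gamma} - 0 = \frac{n}{1-\gamma}$. Writing this out,
\begin{equation}
    -\frac{n}{1-\gamma} \leq Q(\mathbf{s},\mathbf{a}) - Q(\mathbf{s},\mathbf{a}') \leq \frac{n}{1-\gamma},
\end{equation}
so that $|Q(\mathbf{s},\mathbf{a}) - Q(\mathbf{s},\mathbf{a}')| \leq \frac{n}{1-\gamma}$, which is exactly the claimed bound.

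There is essentially no genuine obstacle here: the statement is a one-line consequence of the uniform bound in Lemma~\ref{lem:lem}. The only point deserving a moment of care is the quantifier ``$\forall Q$'' in the claim, i.e.\ that the inequality should hold not merely at the fixed point but for every iterate produced by the \textit{SoftFair} update. Since Lemma~\ref{lem:lem} is established by induction on the episode index $ep$ and therefore bounds every iterate $Q^{ep}$ within $[0,\frac{n}{1-\gamma}]$, this uniformity is automatic, and the corollary follows directly.
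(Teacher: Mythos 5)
Your proposal is correct and matches the paper's (implicit) argument: the corollary is stated as an immediate consequence of Lemma~\ref{lem:lem}, and bounding the difference of two points in the common interval $[0,\frac{n}{1-\gamma}]$ by the interval's width is exactly the intended derivation. Your extra care about the quantifier over all iterates $Q^{ep}$ is a reasonable clarification but does not change the argument.
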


Follow~\citet{song2019revisiting}, we let $\delta(\mathbf{s})=\sup_Q \max_{\mathbf{a},\mathbf{a}^\prime}|Q(\mathbf{s},\mathbf{a})-Q(\mathbf{s},\mathbf{a}^\prime)|$ denote the largest distance between state-action value functions. Then we have the following lemma.


\begin{lemma}\label{lem:bound}
$\forall Q$ and $\forall \mathbf{s}$, Let $\Pr(\cdot|\mathbf{s}) = [\Pr(\mathbf{a}=\mathbb{I}_{\{1\}}|\mathbf{s}), \dots,\Pr(\mathbf{a}=\mathbb{I}_{\{n\}}|\mathbf{s})]^\top$ and $Q(\mathbf{s},\cdot) = [{Q(\mathbf{s},\mathbf{a}=\mathbb{I}_{\{1\}}}), \dots,{Q(\mathbf{s},\mathbf{a}=\mathbb{I}_{\{n\}}})]^\top$, here the superscript $\top$ denotes the vector transpose. We have $\frac{\delta(\mathbf{s})}{n\exp[c \cdot \delta(\mathbf{s})]}\leq\underset{\mathbf{a}}{\max} Q(\mathbf{s},\mathbf{a}) - (\Pr(\cdot|\mathbf{s}))^\top Q(\mathbf{s},\cdot)\leq\frac{n-1}{2+c}$.
\end{lemma}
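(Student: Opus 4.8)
The plan is to reduce to the $k=1$ case (as the surrounding text does), fix a joint state $\mathbf{s}$, and work directly with the scalars $q_i := Q(\mathbf{s},\mathbb{I}_{\{i\}})$. By the strong decomposability of the RMAB, selecting arm $i$ alone gives $Q(\mathbf{s},\mathbb{I}_{\{i\}}) = \sum_{j\ne i}Q_j(s_j,0) + Q_i(s_i,1) = C + \lambda_i$, where $C=\sum_j Q_j(s_j,0)$ is independent of $i$ and $\lambda_i$ is exactly the logit of Equation~\ref{eq:Q(s,a)}. Hence the softmax weights $\Pr(\mathbb{I}_{\{i\}}\mid\mathbf{s}) = e^{c\lambda_i}/\sum_j e^{c\lambda_j}$ coincide with the softmax of the $q_i$ (the $C$ cancels in the ratio), and $C$ cancels again in $\max_{\mathbf{a}}Q(\mathbf{s},\mathbf{a}) - (\Pr(\cdot\mid\mathbf{s}))^\top Q(\mathbf{s},\cdot)$. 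After relabelling so that $q_1=\max_i q_i$ and writing $d_i := q_1-q_i\ge 0$ and $p_i := e^{-cd_i}/\sum_j e^{-cd_j}$, the target becomes $g := \max_{\mathbf{a}}Q - \Pr^\top Q = \sum_i p_i d_i = \frac{\sum_i d_i e^{-cd_i}}{\sum_j e^{-cd_j}} \ge 0$.

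For the lower bound I would keep only the term coming from the arm with the largest gap. Taking the index $i^\ast$ with $d_{i^\ast}=\delta(\mathbf{s})$ (for the $Q$ realising the supremum defining $\delta(\mathbf{s})$) and discarding the remaining nonnegative terms gives $g \ge p_{i^\ast} d_{i^\ast} = \frac{\delta(\mathbf{s})\,e^{-c\delta(\mathbf{s})}}{\sum_j e^{-cd_j}}$. Bounding the normaliser crudely by $\sum_j e^{-cd_j}\le n$ (each summand is at most $1$) yields $g \ge \frac{\delta(\mathbf{s})}{n\exp[c\,\delta(\mathbf{s})]}$, the claimed left inequality.

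For the upper bound I would first peel off the maximising index, writing $g = \frac{\sum_{i\ne 1} d_i e^{-cd_i}}{1+\sum_{i\ne 1}e^{-cd_i}}$, and then apply the elementary pooling inequality $\frac{\sum_i a_i}{1+\sum_i b_i}\le \sum_i \frac{a_i}{1+b_i}$ (valid for $a_i,b_i\ge 0$, since enlarging a denominator only shrinks each fraction) with $a_i=d_i e^{-cd_i}$ and $b_i=e^{-cd_i}$. This reduces the claim to the per-term bound $g\le\sum_{i\ne 1}\psi(d_i)$ with $\psi(d):=\frac{d}{e^{cd}+1}$, so that it suffices to establish the scalar inequality $\psi(d)\le\frac{1}{2+c}$; summing over the $n-1$ non-maximal indices then produces $\frac{n-1}{2+c}$.

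The main obstacle is precisely this one-variable inequality $\frac{d}{e^{cd}+1}\le\frac{1}{2+c}$, equivalently $(2+c)d\le e^{cd}+1$. For $c\ge 1$ I would substitute the convexity estimate $e^{cd}\ge 1+cd+\tfrac12 c^2d^2$, which turns the claim into $c^2d^2-4d+4\ge 0$; this quadratic in $d$ has discriminant $16(1-c^2)\le 0$ and is therefore nonnegative for every $d$. The delicate regime is small $c$, where $\psi$ attains its unconstrained maximum $\approx 0.28/c$ at the root of $cd=1+e^{-cd}$; here I would exploit that the relevant gaps $d_i$ lie in a bounded admissible range (each $d_i=\lambda_1-\lambda_i$ is controlled by the single-arm value spread), so that $\psi$ is still increasing at the endpoints and the endpoint value is dominated via $e^{c}\ge 1+c$, and finish by monotonicity on that interval. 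Pinning down this scalar bound cleanly, and tracking exactly which range of $d$ it requires, is the step I expect to demand the most care.
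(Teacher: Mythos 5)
Your argument follows the paper's proof essentially step for step: the same cancellation of the common passive term so that the softmax over $\lambda_i$ equals the softmax over $Q(\mathbf{s},\mathbb{I}_{\{i\}})$, the same rewriting of the target as a softmax-weighted average of the gaps $d_i$, the same crude bound $\sum_j e^{-cd_j}\le n$ for the lower bound, and the same pooling inequality $\frac{\sum_i a_i}{1+\sum_i b_i}\le\sum_i\frac{a_i}{1+b_i}$ reducing the upper bound to the scalar inequality $(2+c)d\le e^{cd}+1$. The one step you leave open is exactly where the paper is weakest: the paper simply asserts ``intuitively, $0\le\delta_{\{i\}^\prime}(\mathbf{s})\le 1$'' and then Taylor-expands $1+e^{cd}=2+cd+\tfrac12 c^2d^2+\cdots$, so that the claim reduces to $2d\le 2+\tfrac12 c^2 d^2$, which holds precisely because $d\le 1$; no justification of that range is given (and it sits uneasily with the preceding corollary, which only bounds gaps by $\tfrac{n}{1-\gamma}$). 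Your diagnosis of the small-$c$ regime is correct --- the unconstrained maximum of $d/(e^{cd}+1)$ is roughly $0.28/c$, which exceeds $1/(2+c)$ once $c$ drops below about $0.77$, so some bound like $d\le 1$ is genuinely needed there --- and your discriminant argument for $c\ge 1$ is actually cleaner than the paper's, since it requires no restriction on $d$ at all. So the proposal is the same route as the paper, with the residual obligation (establishing the admissible range of the gaps for small $c$) being a gap the paper itself papers over rather than one you introduced.
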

\begin{proof_sketch}
We first need to replace $\Pr(\cdot|\mathbf{s})$ with $Q(\mathbf{s},\cdot)$, and then demonstrate it by looking at possible values for the difference between state-action value function with different actions.
\end{proof_sketch}

Different from {\em Soft} Operator in Eq.~\ref{eq:Psi_soft} ,let $\Psi$ denote the Bellman optimality operator, which we have
\begin{equation}\label{eq:Psi}
    Q^{ep+1}(\mathbf{s},\mathbf{a}) = \Psi Q^{ep}(\mathbf{s},\mathbf{a}) = R(\mathbf{s},\mathbf{a})+\gamma \sum_{\mathbf{s}^\prime}\Pr(\mathbf{s}^\prime|\mathbf{s},\mathbf{a})\underset{\mathbf{a}^\prime}{\max}Q^{ep}(\mathbf{s}^\prime,\mathbf{a}^\prime)
\end{equation}
For the optimal state-action value function, we have $\Psi Q^{\ast}(\mathbf{s},\mathbf{a})=Q^{\ast}(\mathbf{s},\mathbf{a})$.

\begin{theorem} \label{thm:soft_bound}
Our \textit{SoftFair} method can achieve the performance bound as
$\underset{ep\rightarrow \infty}{\lim \sup}V^{ep}(\mathbf{s})\leq V^{\ast}(\mathbf{s})$, where $V^{\ast}(\mathbf{s})$ is the optimal value function. More specifically, we have
\begin{equation*}
\begin{aligned}
\underset{ep\rightarrow \infty}{\lim \sup}Q^{ep}(\mathbf{s},\mathbf{a}) \leq Q^{\ast}(\mathbf{s},\mathbf{a}) \quad \text{and}\\
\underset{ep\rightarrow \infty}{\lim \inf}Q^{ep}(\mathbf{s},\mathbf{a}) \geq Q^{\ast}(\mathbf{s},\mathbf{a})- \frac{n-1}{(2+c)(1-\gamma)}
\end{aligned}
\end{equation*}
\end{theorem}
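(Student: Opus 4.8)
The plan is to sandwich the \emph{Soft} operator $\Psi_{soft}$ between the Bellman optimality operator $\Psi$ and a constant-shifted copy of it, and then transfer the well-understood contraction behaviour of $\Psi$ to the iterates $Q^{ep}$. The whole argument rests on the elementary observation that for any $Q$ and any $\mathbf{s}'$ the softmax-weighted average of the action values never exceeds their maximum, i.e. $\sum_{\mathbf{a}'}\Pr(\mathbf{a}'|\mathbf{s}')Q(\mathbf{s}',\mathbf{a}') \le \max_{\mathbf{a}'}Q(\mathbf{s}',\mathbf{a}')$, together with Lemma~\ref{lem:bound}, which quantifies how far below the maximum this average can fall.

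For the upper bound I would first note that the inequality above, inserted into Equations~\ref{eq:Psi_soft} and~\ref{eq:Psi}, yields $\Psi_{soft}Q \le \Psi Q$ pointwise for every $Q$. Initialising with $Q^0\equiv 0$, which satisfies $Q^0\le Q^\ast$ by Lemma~\ref{lem:lem}, I would argue by induction that $Q^{ep}\le Q^\ast$ for all $ep$: assuming $Q^{ep}\le Q^\ast$, monotonicity of the Bellman operator gives $Q^{ep+1}=\Psi_{soft}Q^{ep}\le\Psi Q^{ep}\le\Psi Q^\ast=Q^\ast$. Passing to the limit delivers $\limsup_{ep\to\infty}Q^{ep}(\mathbf{s},\mathbf{a})\le Q^\ast(\mathbf{s},\mathbf{a})$, and since $V^{ep}(\mathbf{s})$ is itself a softmax average of $Q^{ep}(\mathbf{s},\cdot)$ it is bounded above by $\max_{\mathbf{a}}Q^{ep}(\mathbf{s},\mathbf{a})\le V^\ast(\mathbf{s})$, giving $\limsup_{ep\to\infty}V^{ep}(\mathbf{s})\le V^\ast(\mathbf{s})$.

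For the lower bound I would use the upper estimate in Lemma~\ref{lem:bound} to control the one-step discrepancy between the two operators: $\Psi Q(\mathbf{s},\mathbf{a})-\Psi_{soft}Q(\mathbf{s},\mathbf{a})=\gamma\sum_{\mathbf{s}'}\Pr(\mathbf{s}'|\mathbf{s},\mathbf{a})\bigl[\max_{\mathbf{a}'}Q(\mathbf{s}',\mathbf{a}')-\sum_{\mathbf{a}'}\Pr(\mathbf{a}'|\mathbf{s}')Q(\mathbf{s}',\mathbf{a}')\bigr]\le\gamma\frac{n-1}{2+c}$. Writing $D^{ep}=Q^\ast-Q^{ep}\ge 0$ (by the upper bound just established) and decomposing $D^{ep+1}=(\Psi Q^\ast-\Psi Q^{ep})+(\Psi Q^{ep}-\Psi_{soft}Q^{ep})$, I would bound the first bracket by $\gamma\|D^{ep}\|_\infty$ using subadditivity of the max together with $D^{ep}\ge 0$, and the second by $\gamma\frac{n-1}{2+c}$. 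This yields the scalar recursion $\|D^{ep+1}\|_\infty\le\gamma\|D^{ep}\|_\infty+\gamma\frac{n-1}{2+c}$, whose limit obeys $\limsup_{ep\to\infty}\|D^{ep}\|_\infty\le\frac{\gamma(n-1)}{(2+c)(1-\gamma)}\le\frac{n-1}{(2+c)(1-\gamma)}$, which is exactly $\liminf_{ep\to\infty}Q^{ep}(\mathbf{s},\mathbf{a})\ge Q^\ast(\mathbf{s},\mathbf{a})-\frac{n-1}{(2+c)(1-\gamma)}$.

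The main obstacle is that $\Psi_{soft}$ is \emph{not} linear: the softmax weights $\Pr(\mathbf{a}'|\mathbf{s}')$ themselves depend on $Q$, so it is not immediate that $\Psi_{soft}$ is monotone or a $\gamma$-contraction (indeed the Boltzmann softmax operator is known to fail non-expansiveness in general). I sidestep this by never requiring those properties of $\Psi_{soft}$ directly: the upper bound uses only the unconditional inequality softmax-average $\le$ max, and the lower bound compares the iterates to the genuine contraction $\Psi$ while absorbing the entire effect of the softmax into the uniform per-step gap supplied by Lemma~\ref{lem:bound}. The one remaining point of care is justifying that the $\limsup$ and $\liminf$ of the bounded sequence $Q^{ep}$ are well defined and that the scalar recursion may be passed to the limit; Lemma~\ref{lem:lem} supplies the boundedness, and the standard analysis of the affine recursion $x_{ep+1}\le\gamma x_{ep}+b$ closes this gap.
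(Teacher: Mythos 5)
Your proposal is correct and rests on exactly the same two pillars as the paper's proof: the pointwise inequality $\Psi_{soft}Q\le\Psi Q$ (softmax average $\le$ max) for the upper bound, and the uniform per-step gap $\frac{n-1}{2+c}$ from Lemma~\ref{lem:bound} accumulated through a geometric series for the lower bound, with Lemma~\ref{lem:lem} supplying boundedness. The only real difference is organizational. The paper runs both bounds as an induction on the difference of the two operator \emph{trajectories}, showing $0\le\Psi^{ep}Q^{0}-\Psi_{soft}^{ep}Q^{0}\le\sum_{i=1}^{ep}\gamma^{i}\frac{n-1}{2+c}$ and then invoking $\Psi^{ep}Q^{0}\to Q^{\ast}$; you instead compare $Q^{ep}$ directly to the fixed point via the affine recursion $\|Q^{\ast}-Q^{ep+1}\|_{\infty}\le\gamma\|Q^{\ast}-Q^{ep}\|_{\infty}+\gamma\frac{n-1}{2+c}$. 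Your route buys a marginally tighter constant $\frac{\gamma(n-1)}{(2+c)(1-\gamma)}$ and dispenses with the separate appeal to convergence of exact value iteration, while the paper's version makes the sandwiching of the two iterate sequences explicit; both, as you correctly emphasize, use only monotonicity and non-expansiveness of $\Psi$ and never require $\Psi_{soft}$ itself to be monotone or contractive, which is the genuine subtlety here (and one the paper passes over silently in the step $\Psi\Psi^{K}Q^{0}\ge\Psi\Psi_{soft}^{K}Q^{0}$).
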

\begin{proof}
We prove the bound by induction based on Lemma~\ref{lem:lem} and \ref{lem:bound}.
\end{proof}

\begin{conjecture}
For the cause when multiple arms can be pulled at each time step, i.e., $k\ne 1$, Our \textit{SoftFair} method can achieve the performance bound as
$\underset{ep\rightarrow \infty}{\lim \sup}{\Psi^{ep}}V^{0}(\mathbf{s})\leq V^{\ast}(\mathbf{s})$. More specifically, we have
\begin{equation*}
\begin{aligned}
\underset{ep\rightarrow \infty}{\lim \sup}Q^{ep}(\mathbf{s},\mathbf{a}) = \underset{ep\rightarrow \infty}{\lim \sup}{\Psi^{ep}}Q^{0}(\mathbf{s},\mathbf{a
})\leq Q^{\ast}(\mathbf{s},\mathbf{a}) \quad \text{and}\\
\underset{ep\rightarrow \infty}{\lim \inf}Q^{ep}(\mathbf{s},\mathbf{a}) \geq Q^{\ast}(\mathbf{s},\mathbf{a})- \frac{n-k}{(2+c)(1-\gamma)}
\end{aligned}
\end{equation*}
\end{conjecture}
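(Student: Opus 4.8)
The plan is to mirror the induction proving Theorem~\ref{thm:soft_bound} for the single-selection ($k=1$) case, replacing its per-step loss bound $\frac{n-1}{2+c}$ with $\frac{n-k}{2+c}$. First I would use the decomposability of the RMAB together with linearity of expectation to rewrite the one-step value obtained by the soft $k$-selection policy as $\sum_{i=1}^{n}\Pr(a_i=1|\mathbf{s})\,Q_i$, writing $Q_i:=Q(\mathbf{s},\mathbb{I}_{\{i\}})$, where the marginals $\Pr(a_i=1|\mathbf{s})$ are induced by sampling $k$ times without replacement from the softmax distribution of Equation~\ref{eq:lambda} and satisfy $\sum_i\Pr(a_i=1|\mathbf{s})=k$. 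By Theorem~\ref{thm:lambda}, the optimal deterministic selection picks the top-$k$ arms, so the optimal one-step target equals the sum of the $k$ largest values $Q_i$.

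The central step is to generalize Lemma~\ref{lem:bound} to the $k$-selection setting, namely to bound the one-step gap between this optimal top-$k$ target and the soft-policy expectation by $\frac{n-k}{2+c}$. Writing the gap as $\sum_{i\in\text{top-}k}(1-\Pr(a_i=1|\mathbf{s}))Q_i-\sum_{i\notin\text{top-}k}\Pr(a_i=1|\mathbf{s})Q_i$ and using that the two weight sums coincide (both equal the probability mass displaced from the top-$k$ block), the error is governed by the $n-k$ arms lying outside the top-$k$ set rather than by the $n-1$ suboptimal arms of the single-selection case. The spread $\delta(\mathbf{s})$ and the temperature $c$ then enter exactly as in Lemma~\ref{lem:bound}, producing the $\frac{n-k}{2+c}$ bound.

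With this per-step bound established, the induction from Theorem~\ref{thm:soft_bound} applies with only cosmetic changes. The upper bound $\underset{ep\rightarrow \infty}{\lim \sup}\,Q^{ep}(\mathbf{s},\mathbf{a})\leq Q^\ast(\mathbf{s},\mathbf{a})$ follows because the soft $k$-selection is a feasible policy and hence cannot exceed the optimal value. For the lower bound, feeding the $\frac{n-k}{2+c}$ per-step loss through the same argument---using Lemma~\ref{lem:lem} for boundedness and accumulating the loss with the geometric factor $\frac{1}{1-\gamma}$---delivers $\underset{ep\rightarrow \infty}{\lim \inf}\,Q^{ep}(\mathbf{s},\mathbf{a})\geq Q^\ast(\mathbf{s},\mathbf{a})-\frac{n-k}{(2+c)(1-\gamma)}$.

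The hard part will be rigorously establishing the generalized Lemma~\ref{lem:bound}. Unlike the $k=1$ case, the marginals $\Pr(a_i=1|\mathbf{s})$ have no clean closed form---they are obtained by permutation iteration over the without-replacement draws---so proving the $\frac{n-k}{2+c}$ bound requires a careful combinatorial argument showing that sampling without replacement concentrates selection probability on the high-value arms at least as strongly as the single-sample softmax estimate predicts. Handling these coupled without-replacement marginals is presumably why the multi-selection case is stated here as a conjecture rather than a theorem.
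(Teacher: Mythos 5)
The statement you are proving is presented in the paper only as a conjecture; the authors give no proof of it, so there is no argument of theirs to compare against. Your plan is the natural one — it mirrors the paper's own proof of Theorem~\ref{thm:soft_bound}, swapping the per-step loss $\frac{n-1}{2+c}$ for $\frac{n-k}{2+c}$ — and your algebraic decomposition of the one-step gap into $\sum_{i\in\text{top-}k}(1-\Pr(a_i=1|\mathbf{s}))Q_i-\sum_{i\notin\text{top-}k}\Pr(a_i=1|\mathbf{s})Q_i$, with the observation that the two displaced probability masses coincide because $\sum_i\Pr(a_i=1|\mathbf{s})=k$, is correct. But the proposal does not constitute a proof, and the gap sits exactly where you place it: the generalization of Lemma~\ref{lem:bound}. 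The $k=1$ proof of that lemma leans entirely on the fact that the selection probabilities are an exact softmax in $c\cdot\lambda_i$, which lets the authors rewrite each weight as $\exp[-c\,\delta_{\{i\}'}(\mathbf{s})]$ up to a common normalizer and then Taylor-expand $\frac{\delta}{1+\exp[c\delta]}\leq\frac{1}{2+c}$ term by term. Under sampling $k$ times without replacement, the marginal inclusion probability $\Pr(a_i=1|\mathbf{s})$ of a suboptimal arm is \emph{not} softmax-shaped — it can be on the order of $k$ times the single-draw probability — so the crucial per-term inequality $\Pr(a_i=1|\mathbf{s})\,\delta_i\leq\frac{1}{2+c}$ has no analogue, and nothing in your sketch replaces it. The phrase ``concentrates selection probability on the high-value arms at least as strongly as the single-sample softmax estimate predicts'' is the entire content of the conjecture restated, not an argument for it.

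A secondary issue is the upper bound: you justify $\limsup_{ep\to\infty}Q^{ep}\leq Q^{\ast}$ by saying the soft $k$-selection ``is a feasible policy and hence cannot exceed the optimal value.'' That reasoning applies to the value of a fixed policy, but $Q^{ep}$ here is produced by iterating the operator $\Psi_{soft}$, which is not the evaluation operator of any fixed policy (the action distribution changes with $Q^{ep}$ each iteration), so one still needs the inductive comparison $\Psi_{soft}^{ep}Q^0\leq\Psi^{ep}Q^0$ as in the paper's proof for $k=1$; that induction does go through for general $k$ since a convex combination of the $\binom{n}{k}$ joint actions is always dominated by the max, so this part is repairable. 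The lower bound is not: until the $\frac{n-k}{2+c}$ one-step bound is actually established for the coupled without-replacement marginals, the conjecture remains open, as you yourself anticipate in your final paragraph.
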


\begin{figure*}[t]
    \centering
    \subfigure[ ]{
    \includegraphics[width=0.31\linewidth]{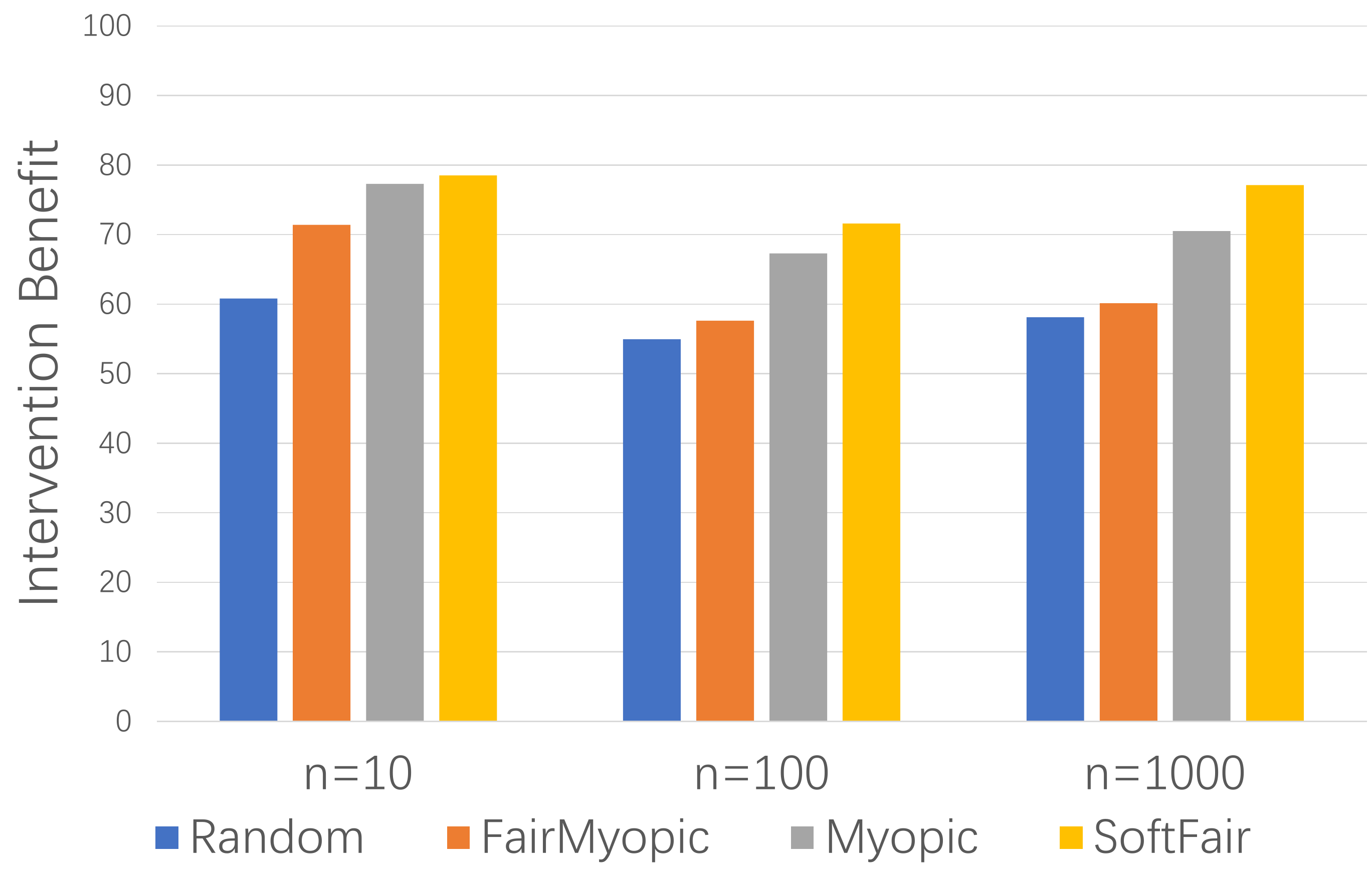}
    }
    \subfigure[ ]{
	\includegraphics[width=0.31\linewidth]{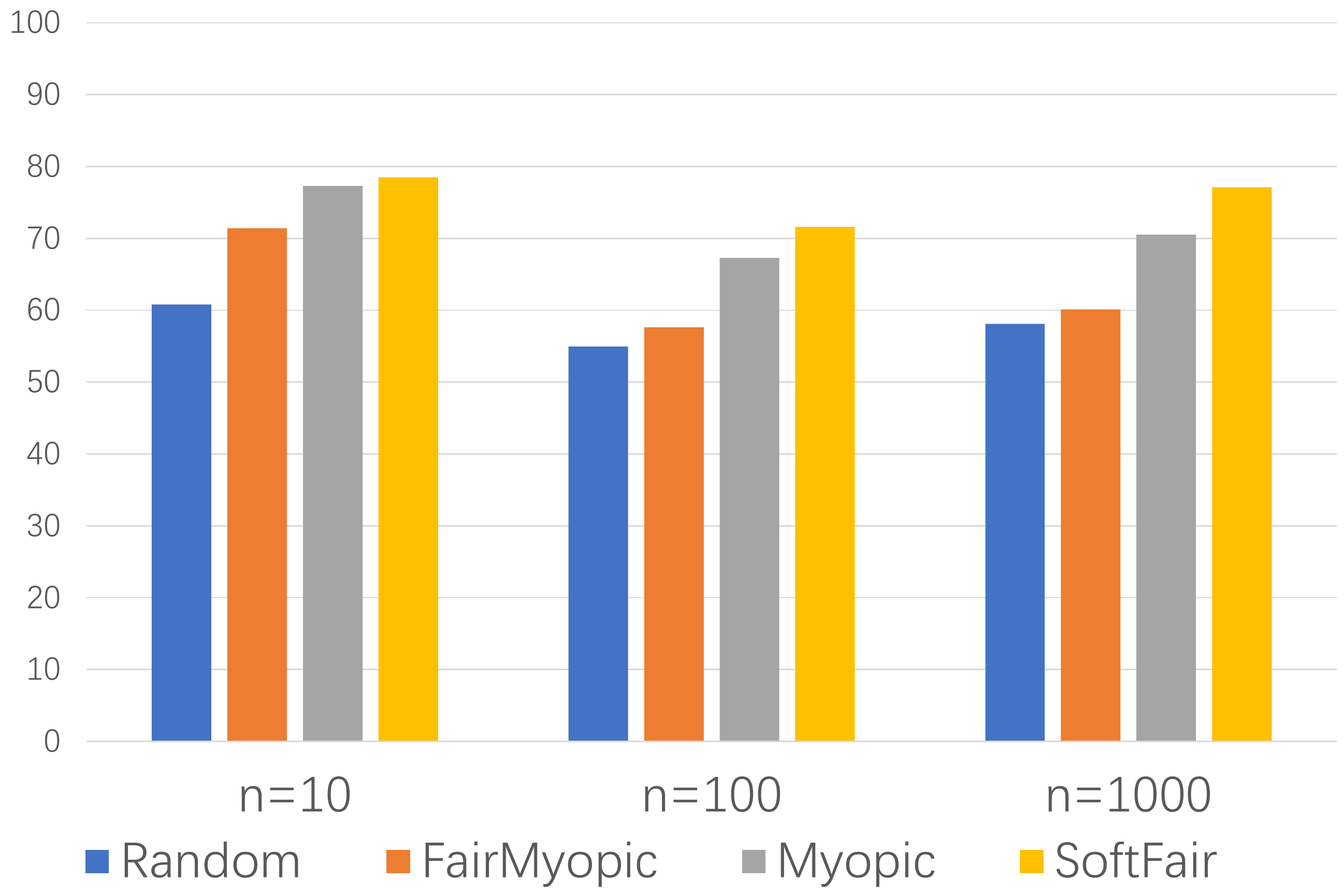}
    }
    \subfigure[ ]{
	\includegraphics[width=0.31\linewidth]{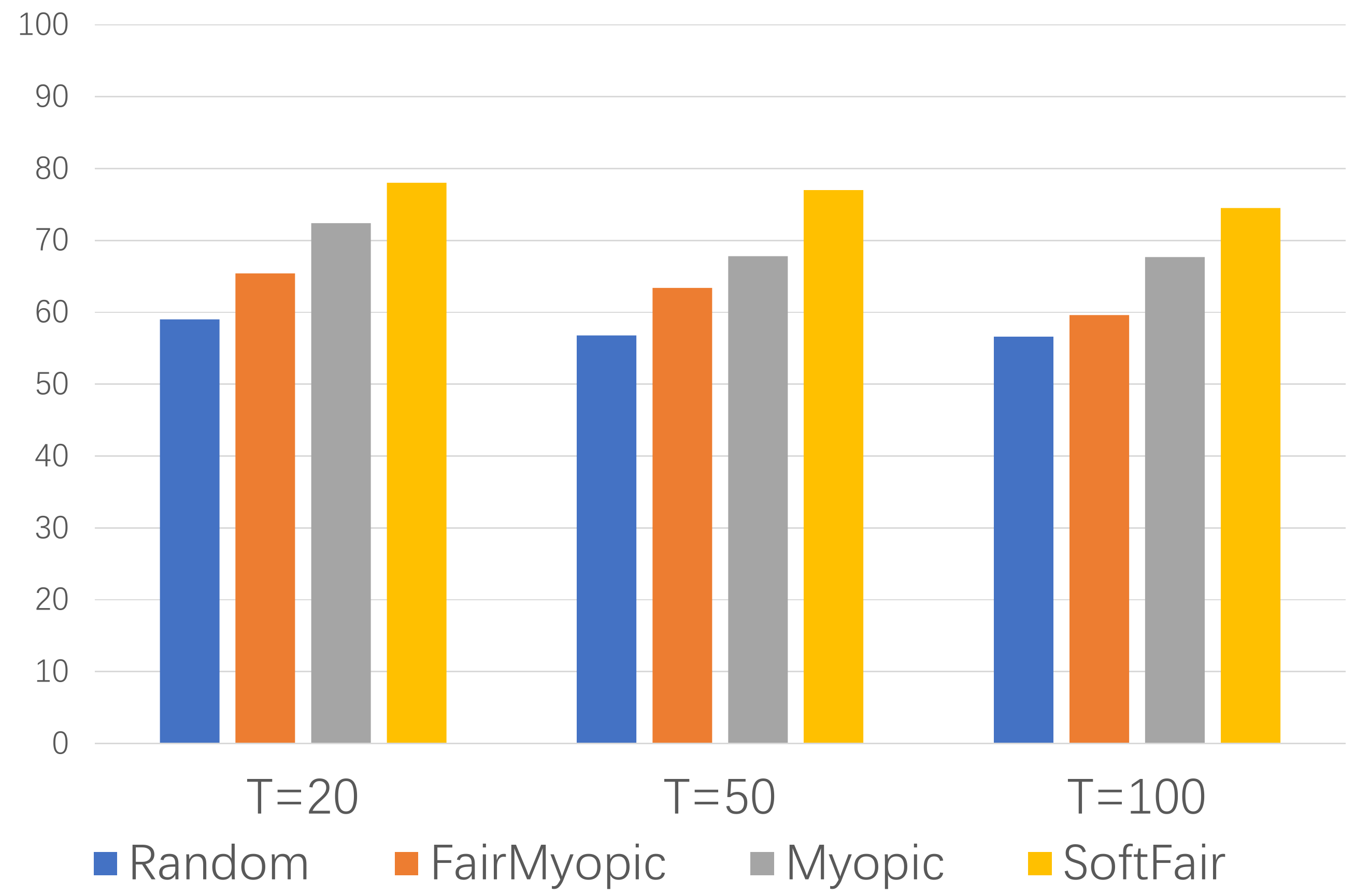}
    }
    \caption{Intervention benefit of \textit{SoftFair} is consistently greater than other baselines. (a) We fix $T=50$, and $k=10\%n$, and let $n=\{ 10,100,1000\}$. (b) We fix $T=50$, and $n=100$, and let $k=\{5,10,20\}$. (c) We fix $n=100$, and $k=10$, and let $T=\{ 20, 50, 100\}$.}
    \label{fig:Result}
\end{figure*}




\begin{figure*}[th]
    \centering
    \subfigure[ ]{
    \includegraphics[width=0.47\linewidth, height=1.5in]{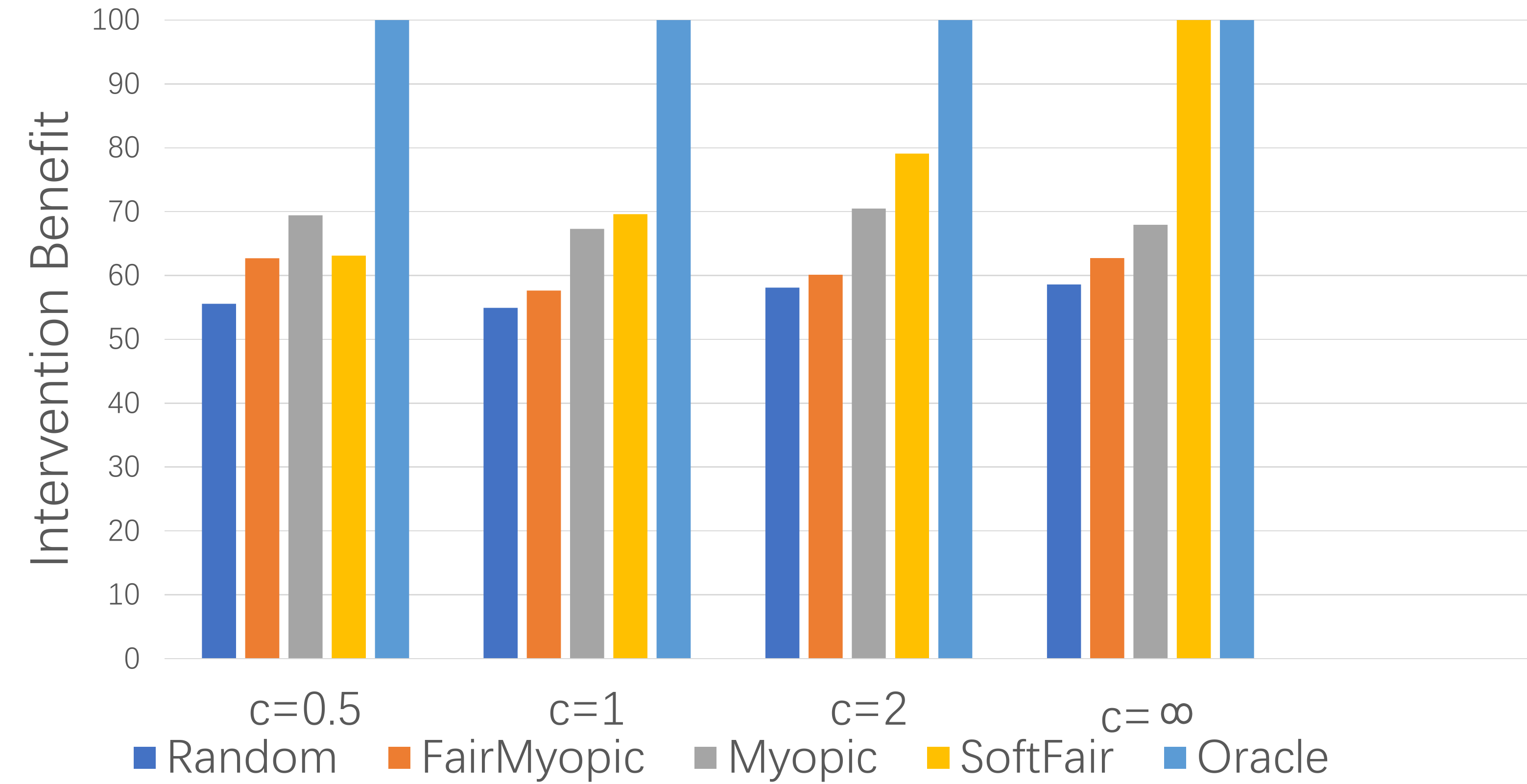}
    }\label{fig:trick}
    \subfigure[ ]{
	\includegraphics[width=0.47\linewidth,height=1.6in]{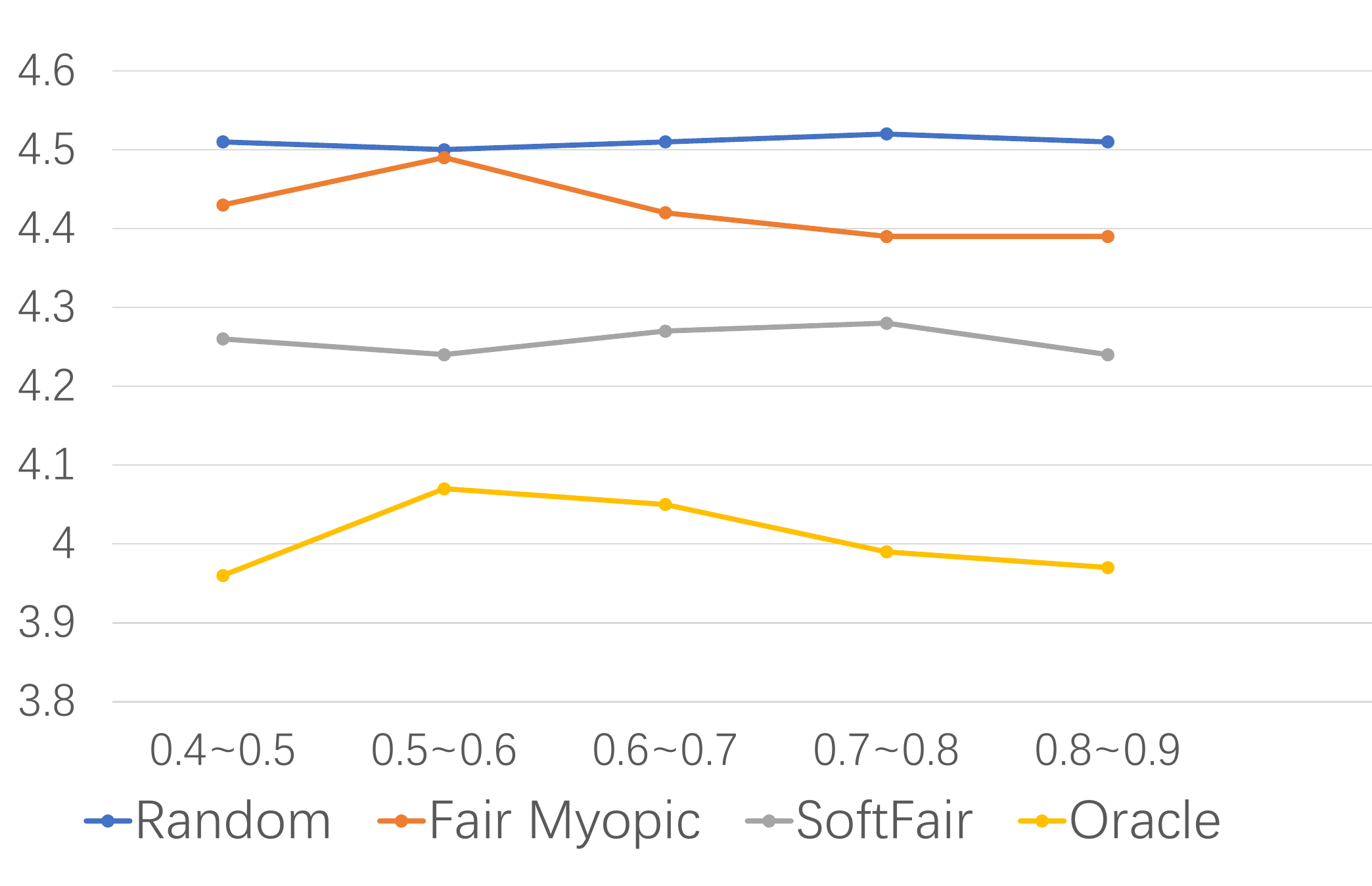}
    }\label{fig:entropy}
    \caption{(a) The intervention benefit of different multiplier $c$. Here $c=\infty$ refers to deterministically selecting the top $k$ arm with the highest cumulative rewards, but updating the value function with $c=1$. (b) The action entropy of a single process. We investigate the action entropy for different value of $P_{0,1}^1$ range from $0.4$ to $0.9$.}
\end{figure*}


\begin{table}[ht]
\centering
\caption{Resutls for CAPA Adherence dataset with $n=100,k=10,T=80$.}
\begin{tabular}{|l|c|c|}
\hline
Policy      & Intervention benefit & Action entropy   \\ \toprule
Random      & $79\pm13$            & $4.56\pm 0.0056$ \\ \midrule
Myopic      & $98\pm 3.3$          & $2.67\pm 0.0$   \\ \midrule
FairMyopic & $83\pm 11$           & $4.5\pm 0.0089$  \\ \midrule
\textit{SoftFair}    & $93\pm 7.6$          & $4.27\pm 0.019$  \\ \bottomrule
\end{tabular}
\label{tab:result}
\end{table}

\section{Experiments}
In this section, we empirically demonstrate that our proposed method \textit{SoftFair} that enforces the probabilistic fairness constraint introduced in Section~\ref{sec:prob}, can effectively approximate the cumulative reward maximization objective compared to the baselines on both (a) a realistic patient adherence behavior dataset~\cite{kang2013markov} and (b) a synthetic dataset that the underlying structural constraints outlined in Appendix~\ref{app_sec:dataset} are preserved. We consider the average rewards criterion over the finite time horizon where we use the discount factor $\gamma=1$, and set the following scenario for the simulation: $n=\{ 10, 100, 1000\}$, $k=\{5\%n, 10\%n, 20\%n\}$, $T=\{20, 50, 100\}$. All results are averaged over 50 simulations. In particular, We compare our method against the following baselines:

\begin{itemize}[leftmargin=*]
    \item \textbf{Random}: At each time step, algorithm randomly select $k$ arms to play. This will ensure that each arm has the same probability of being selected.
    \item \textbf{Myopic}: A myopic policy ignores the impact of present actions on future rewards and instead focuses entirely on the predicted immediate returns. It select $k$ arms that maximize the expected reward at the immediate next time step.
    Formally, this could be described as choosing the $k$ arms with the largest gap $\Delta_t = P_{s,1}^1-P_{s,1}^0$ at time step $t$ under the observed state $s$.
    \item \textbf{FairMyopic}: After computing $\Delta_t$ for each arm, instead of deterministically selecting the arm with the highest immediate reward, we use the \textit{softmax} function over $\Delta_t$ to get the probability of each arm being selected. Then we sample the $k$ arms according to the probability.
    \item \textbf{Oracle}: Algorithm by~\citet{mate2021efficient} under assumption that the states of all arms are fully observable and the transition probabilities are known without considering fairness constraints. We use a sigmoid function as~\citet{mate2021efficient} to approximate the Whittle index to select arms deterministically for the finite time horizon problem.
\end{itemize}


We examine policy performances from two perspectives: cumulative rewards and fairness. To this end, we rely on two performance metrics:(a) Intervention benefit: The intervention benefit is defined as $\frac{\bar{R}_{\text{method}}-\bar{R}_{\text{No intervention}}}{\bar{R}_{\text{Oracle}}-\bar{R}_{\text{No intervention}}}\times 100\%$. It calculates the difference between one algorithm's total expected cumulative reward and the total reward when no intervention is involved, then normalized by the difference between the reward obtained without intervention (0\% intervention benefit) and the asymptotically optimal but fairness-agnostic Oracle algorithm in baselines (100\% intervention benefit). (b) Action entropy: We calculate the selection frequency distribution across all time steps, and then compute its entropy after normalization through: $Entropy = -\sum_{i\in[n]}P(i)\log P(i)$, where $P(i)$ is the normalization of the number of times arm $i$ is selected (i.e., the number of times that arm $i$ has been selected divided by $k\cdot n$), and $P(i)\log P(i)=0$ if an arm is never selected across all time steps.

\paragraph{Realistic dataset}: Obstructive sleep apnea is one of the most prevalent sleep disorder among adults, and continuous positive airway pressure therapy (CPAP) is a highly effective treatment when it is used consistently for the duration of each sleep bout. But non-adherence to CPAP in patients hinders effective treatment of this type of sleep disorder. Similar to~\cite{herlihy2021planning}, we adapt the Markov model of CPAP adherence behavior in~\cite{kang2013markov} to a two-state system with the clinical adherence criteria. We add a small noise to each transition matrix so that the dynamics of each individual arm is different (See more details about the dataset in Appendix~\ref{app_sec:dataset}).

In table~\ref{tab:result}, we report average results for each algorithm. Myopic method has the best performance, which is caused by the specific structure of the underlying transition matrices, since there is not too much difference between $n$ Markovian models, and in this case the Myopic approach is indeed close to optimal. However, given our notion of fairness, the Myopic technique is not fair. Furthermore, Myopic policy might fail in some circumstances, and is even worse than Random policy~\cite{mate2020collapsing}. Meanwhile, our \textit{SoftFair} performs well while adhering to the specified fairness requirement.

\paragraph{Synthetic dataset}: (a) We first test the performance when the number of patients (arms) varies. Figure~\ref{fig:Result}a compares the
intervention benefit for $n = \{10, 100, 1000\}$ patients and $k = 10\%$ of $n$. As shown in Figure~\ref{fig:Result}a, in addition to satisfying the fairness constraints, our \textit{SoftFair} consistently outperforms the Random, Myopic and FairMyopic baselines. (b) We next compare the intervention benefit when the number of arms $n$ is fixed and the resource constraint $k$ is varied. Specifically, we fix $n=100$ patients, and let $k=\{ 5\%n, 10\%n,20\%n \}$. Figure~\ref{fig:Result}b shows that there has been a gradual increase in the intervention benefit as the $k$ increases. One possible reason is that a larger resource budget $k$ can make the arms with higher cumulative rewards more likely to be selected, thereby reducing the performance gap with the Oracle method. (c) The performance of our method is slightly influenced by the time horizon $T$. As shown in Figure~\ref{fig:Result}c, the common trend is that a smaller $T$ leads to better performance. This means that our method can efficiently solve the RMAB in a finite time horizon, while a larger horizon $T$ will make the convergence slower. Overall, all results demonstrate the effectiveness of our method compared to other baselines.




\paragraph{Intervention benefit when $c$ changes}: We also look into the effect of the multiplier parameter $c$ on performance. Formally, a larger $c$ will widen the gap between the probabilities of choosing an arm, resulting in a better performance as it prefers to choose an arm with a higher cumulative reward. Figure~\ref{fig:trick} reveals that \textit{SoftFair} performs well empirically as $c$ increases, and if we deterministically choose the top $k$ arms according to the $\lambda$ value, it will achieve the optimal result.

\paragraph{Action entropy comparison}: We also compare the entropy of the action of a process in the synthetic dataset when $P_{0,1}^1$ ranges from $0.4$ to $0.9$. As shown in Figure~\ref{fig:entropy}, the Random policy has the highest value as it requires uniform selection of all arms. Our proposed method, \textit{SoftFair} consistently has a higher action entropy than the Oracle method because we enforce fairness constraints. FairMyopic has a high action entropy value, but it is indeed unfair under our proposed fairness constraints, as it relies on immediate rewards.

\section{Conclusion}
In this paper, we study fairness constraints in the context of Restless Multi-Arm Bandit model, which is of critical importance for adherence problems in public health (e.g., monitoring adherence of prevention medicine for Tuberculosis, monitoring engagement of mothers during calls on good practices during pregnancy). To tackle the challenges introduced by the objective, we design a computationally efficient algorithm, a novel solution is proposed by integrating the \textit{softmax} value iteration technique in the RMAB setting. Our algorithms can effectively approximate the optimal value function within the proven performance bounds.

\bibliography{neurips_2022}
\bibliographystyle{plainnat}

\section*{Checklist}

\begin{enumerate}

\item For all authors...
\begin{enumerate}
  \item Do the main claims made in the abstract and introduction accurately reflect the paper's contributions and scope?
    \answerYes{} We incorporates \textit{softmax} based value iteration method in the RMAB setting to design selection algorithms that manage to satisfy the proposed fairness constraint. The Method, Analysis and Experiment section reflect these claims.
  \item Did you describe the limitations of your work?
    \answerNo{} There is the trade-off between the optimal performance and fairness constraint.
  \item Did you discuss any potential negative societal impacts of your work?
    \answerNo{} We aim to ensure the fairness in resource allocation problem.
  \item Have you read the ethics review guidelines and ensured that your paper conforms to them?
    \answerYes{}
\end{enumerate}

\item If you are including theoretical results...
\begin{enumerate}
  \item Did you state the full set of assumptions of all theoretical results?
    \answerYes{} Please see the Analysis Section and Appendix.
        \item Did you include complete proofs of all theoretical results?
    \answerYes{} Please see the Analysis Section and Appendix.
\end{enumerate}

\item If you ran experiments...
\begin{enumerate}
  \item Did you include the code, data, and instructions needed to reproduce the main experimental results (either in the supplemental material or as a URL)?
    \answerYes{} We include the code, datasets, readme file for reproducibility.
  \item Did you specify all the training details (e.g., data splits, hyperparameters, how they were chosen)?
    \answerYes{} Please see the Experiment Section.
        \item Did you report error bars (e.g., with respect to the random seed after running experiments multiple times)?
    \answerYes{} Please see Table 2.
        \item Did you include the total amount of compute and the type of resources used (e.g., type of GPUs, internal cluster, or cloud provider)?
    \answerNo{} It does not require high computing resources.
\end{enumerate}

\item If you are using existing assets (e.g., code, data, models) or curating/releasing new assets...
\begin{enumerate}
  \item If your work uses existing assets, did you cite the creators?
    \answerYes{} We correctly cite the papers that provide datasets and code of baselines for experiment.
  \item Did you mention the license of the assets?
    \answerNA{}
  \item Did you include any new assets either in the supplemental material or as a URL?
    \answerNA{}
  \item Did you discuss whether and how consent was obtained from people whose data you're using/curating?
    \answerNA{}
  \item Did you discuss whether the data you are using/curating contains personally identifiable information or offensive content?
    \answerNA{}
\end{enumerate}

\item If you used crowdsourcing or conducted research with human subjects...
\begin{enumerate}
  \item Did you include the full text of instructions given to participants and screenshots, if applicable?
    \answerNA{}
  \item Did you describe any potential participant risks, with links to Institutional Review Board (IRB) approvals, if applicable?
    \answerNA{}
  \item Did you include the estimated hourly wage paid to participants and the total amount spent on participant compensation?
    \answerNA{}
\end{enumerate}

\end{enumerate}


\newpage
\appendix

\section{Appendix}
\subsection{More Details about \textit{SoftFair}}
Similarly, we can also rewrite update equation for the state-action value function, we provide it in appendix.
\begin{equation}\label{eq:value2}
\resizebox{\linewidth}{!}{$
  Q^{ep}_{i,t}(s,a) =
    \begin{cases}
        R(s,a)+\underset{s^\prime_{i,t+1}}{\sum} \Pr(s^\prime_{i,t+1}|s,a)(\gamma  \underset{a^\prime}{\sum}\Pr(a^\prime|\mathbf{s}^\prime_{t+1})Q^{ep-1}_{i,t+1}(s^\prime_{i,t+1},a^\prime) &\text{ if } (s,a)=(s_{i,t},a_{i,t})\\
        Q^{ep-1}_{i,t}(s,a) &\text{ otherwise}
        \end{cases}
        $}
\end{equation}

The probability of choosing an arm is the \textit{softmax} function on $\lambda$.
We can write down the probability of $\pi(\mathbf{s},\mathbf{a}=\mathbb{I}_{\{i\}})$, where $i$ is the selected arms when $k= 1$. More specifically, we have $\pi(\mathbf{s},\mathbf{a}=\mathbb{I}_{\{i\}})= softmax_c(c\cdot \lambda_i)$, and note that $\Pr(a_i=1|\mathbf{s}) = softmax_c(c\cdot \lambda_i)$ denote that probability that arm $i$ is in the set of selected arms when $k=1$.

When $k\neq 1$, let the $\phi$ denote the set of selected arms, and $\mathbb{I}_{\{\phi\}}$ denote the action to select arms in set $\phi$ while keeping other arms passive. We have $\pi(\mathbf{s},\mathbf{a}=\mathbb{I}_{\{\phi\}}) = \Pi_{i\in \phi}\Pr(a_i=1|\mathbf{s})$, where $\Pr(a_i=1|\mathbf{s})$ can be obtained through the brute-force permutation iteration over $\pi(\mathbf{s},\mathbf{a}=\mathbb{I}_{\{i\}})= softmax_c(c\cdot \lambda_i)$.

\begin{definition}
(Fairness) Equivalently, a stochastic policy, $\pi$ is fair if for any time step $t\in[T]$, any joint state $\mathbf{s}$ and any two arms $i,j$, where $i \neq j$, The following two statements are equal:
\begin{equation}\label{eq:fairness2}
\begin{aligned}
    \pi_t(\mathbf{s},\mathbf{a}) \geq \pi_t(\mathbf{s}, \mathbf{a}^\prime) \text{ if and only if } Q^\ast(\mathbf{s},\mathbf{a})\geq Q^\ast(\mathbf{s},\mathbf{a}^\prime)\\
    \lambda_i \geq \lambda_j \text{ if and only if } \sum_{\phi_i: \phi_i \in \Phi_i}\pi_t(\mathbf{s},\mathbf{a} = \mathbb{I}_{\{ \phi_i \}}) \geq \sum_{\phi_j: \phi_j \in \Phi_j}\pi_t(\mathbf{s},\mathbf{a} = \mathbb{I}_{\{ \phi_j \}})
\end{aligned}
\end{equation}
Here $\Phi_i = \{ \phi_i \}$ and $\Phi_j = \{ \phi_j \}$ denote any set include arm $i$ as the selected set.
A proof to show this two statements are equivalent is provided in next Section (see Section \ref{pf:pf_fairness2}).

\end{definition}
\textbf{\textit{In summary, the goal of a solution approach is to generate a stochastic policy that never prefers one action over another if the cumulative long-term reward of selecting the latter one is higher.}}

\section{Proofs}
\begin{lemma}\label{lem:value_decay}
Consider the single arm $i$ with a finite horizon $T$, let $V_{m,i,t}(s_i)$ denote the value function start from time step $t\in[T]$ under the state $s_i$, we can have $V_{m,i, t}(s_i)>V_{m,i,t+1}(s_i)\geq 0$, for $\forall s_i\in \{0,1\}$.
\end{lemma}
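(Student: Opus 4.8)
The plan is to prove both the strict monotonicity $V_{m,i,t}(s_i)>V_{m,i,t+1}(s_i)$ and the non-negativity $V_{m,i,t+1}(s_i)\ge 0$ simultaneously by backward induction on $t$, running from $t=T$ down to $t=1$ and dropping the subscript $i$ as in the main text. The starting point is the recursion $V_{m,t}(s)=\max\{Q_{m,t}(s,a=0),Q_{m,t}(s,a=1)\}$ together with the terminal convention $V_{m,T+1}(s)=0$. The crucial structural observation is that, in each $Q_{m,t}(s,a)$, the immediate-reward term $P_{s,1}^a$ and the subsidy $m$ do not depend on $t$; only the continuation value $\gamma\sum_{s'}\Pr(s'\mid s,a)V_{m,t+1}(s')$ carries the time index. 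Hence the entire time dependence of $V_{m,t}$ is funneled through the continuation, which is exactly what lets the inductive comparison close.

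For the base case at $t=T$, the terminal convention gives $V_{m,T}(s)=\max\{P_{s,1}^0+m,\;P_{s,1}^1\}$, which is strictly larger than $0=V_{m,T+1}(s)$ because the expected immediate reward (together with the subsidy $m\ge 0$) is positive; this also establishes $V_{m,T+1}(s)=0\ge 0$. The non-negativity claim at every later step is then immediate: since $R_{min}=0$, $m\ge 0$, and the continuation term is a non-negative multiple of values that are non-negative by the inductive hypothesis, each $Q_{m,t}(s,a)\ge 0$, and therefore so is their maximum $V_{m,t}(s)$.

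For the inductive step, assume $V_{m,t+1}(s')>V_{m,t+2}(s')\ge 0$ for all $s'$ and compare $V_{m,t}$ with $V_{m,t+1}$. Writing each as a maximum over the two actions and using the elementary bound $\max_a f(a)-\max_a g(a)\ge\min_a\bigl(f(a)-g(a)\bigr)$, the $t$-independent reward and subsidy terms cancel and we are left with
\begin{equation*}
V_{m,t}(s)-V_{m,t+1}(s)\;\ge\;\gamma\,\min_{a\in\{0,1\}}\sum_{s'}\Pr(s'\mid s,a)\bigl(V_{m,t+1}(s')-V_{m,t+2}(s')\bigr).
\end{equation*}
Each bracketed difference is strictly positive by the inductive hypothesis, and $\Pr(\cdot\mid s,a)$ is a probability vector, so every inner sum is a convex combination of strictly positive numbers and is itself strictly positive; multiplying by $\gamma>0$ keeps it positive. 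This yields $V_{m,t}(s)>V_{m,t+1}(s)$ and closes the induction.

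The main delicacy is preserving strictness. The $\ge$ inequalities never create a gap on their own, so the strict inequality must be seeded in the base case (via positivity of the expected reward or subsidy) and then propagated by the convex-combination argument; if all rewards and the subsidy were identically zero the value functions would coincide and the strict claim would fail. Verifying that the $\max$-minus-$\max$ bound is applied in the correct direction and that the terminal index $t=T+1$ is handled consistently are the only other points requiring care, and the rest is routine.
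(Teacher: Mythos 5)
Your proof is correct, but it takes a genuinely different route from the paper's. The paper argues by a policy-coupling (suffix) construction: it takes the optimal policy $\pi$ for the horizon starting at $t+1$, replays the same actions starting at $t$, and appends one extra decision at the terminal step $T$, concluding $V_{m,t;\pi'}(s)=V_{m,t+1;\pi}(s)+\gamma^{T-t}V_T(s')\ge V_{m,t+1;\pi}(s)$ and hence $V_{m,t}(s)\ge V_{m,t+1}(s)$ by optimality of $V_{m,t}$. You instead compare the two Bellman recursions head-on by backward induction, using $\max_a f(a)-\max_a g(a)\ge\min_a\bigl(f(a)-g(a)\bigr)$ and the observation that only the continuation term carries the time index. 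Your route is more self-contained and makes the propagation of strictness explicit (the paper's displayed inequality is in fact only a $\ge$, and strictness is never really argued there), whereas the paper's coupling argument is shorter and makes the intuition --- ``one more time slot of non-negative reward'' --- immediate. Two small points on yours: (i) you do not need $m\ge 0$ for non-negativity, since $V_{m,t}(s)\ge Q_{m,t}(s,a=1)$ and the active $Q$-value contains no subsidy term, so the max is non-negative regardless of the sign of $m$; (ii) strictness in both your proof and the paper's silently requires $\gamma>0$ and strictly positive expected terminal reward (guaranteed here by the strictly positive transition entries), since if $\gamma=0$ the values at all $t<T$ coincide and the strict claim fails. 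Neither caveat is a gap relative to the paper, which needs the same assumptions.
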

\paragraph{Proof of Lemma~\ref{lem:value_decay}}
\label{subsec:proof_lem_value_decay}
\begin{proof}
We drop the subscript $i$, i.e., $V_t(s)=V_{i,t}(s_i)$. This is easy to prove. For state $s\in \{0,1 \}$, we can always find a algorithm that ensures $V_{m,t}(s)>V_{m,t+1}(s)$. For example, we assume the optimal algorithm for the state $s$ start from the time step $t+1$ is $\pi$, we can always find a algorithm $\pi^{\prime}$: keep the same actions as the algorithm $\pi$ until reach the last time step $t=T$ as $V_{m,t}(s)$ will has one more time slot compared to $V_{m,t+1}(s)$, and then we pick the action for the last time step $T$ according to the observed state $s^{\prime}$. Since the reward is either $0$ or $1$, thus $V_{T}(s)\geq 0$,so we can have
\begin{equation}
V_{m,t;\pi^{\prime}}(s) = V_{m,t+1;\pi}(s) + \gamma^{T-t} V_{T}(s^\prime)\geq V_{m,t+1;\pi}(s).
\end{equation}
\end{proof}

\paragraph{Proof of Theorem~\ref{thm:index_decay}}
\begin{proof}
Consider the discount reward criterion with discount factor of $\gamma\in [0,1]$ (where $\gamma =1 $ corresponds to the average criterion). Again, we drop the subscript $i$ and let: $Q_{i,t}(s_i, a_i) = Q_t(s,a)$. Because the state $s\in\{0,1 \}$ is fully observable, We can easily calculate $m_T$, where it needs to satisfy $Q_{T}(s,a=0)=Q_{T}(s,a=1)$, i.e., $m_T + P_{s,1}^0= P_{s,1}^1$, thus $m_T = P_{s,1}^1-P_{s,1}^0$. Similarly, $m_{T-1}$ can be solved by assuming equation $Q_{T-1}(s,a=0)=Q_{T-1}(s,a=1)$:
\begin{equation}
\begin{aligned}
        P_{s,1}^0 + m_{T-1} +\gamma (P_{s,1}^0 V_{m_T,T}(1)
        &+ P_{s,0}^0V_{m_T,T}(0))= \\P_{s,1}^1
        &+\gamma (P_{s,1}^1 V_{m_T,T}(1)+P_{s,0}^1 V_{m_T,T}(0))\\
        \rightarrow m_{T-1} =(P_{s,1}^1-P_{s,1}^0) +\gamma(V_{m_T,T}(1)&(P_{s,1}^1-P_{s,1}^0)+V_{m_T,T}(0)(P_{s,0}^1-P_{s,0}^0))
\end{aligned}
\end{equation}
Because $P_{1,1}^1-P_{1,1}^0>0$ and $P_{0,1}^1-P_{0,1}^0>0$ from the structural constraint we mentioned before and $V_{m_T,T}(s)\geq 0$ according to Lemma~\ref{lem:value_decay},
we have $m_{T-1}>m_T=P_{s,1}^1-P_{s,1}^0$. Now we show $m_t>m_{t+1}$. Equivalently, this can be expressed as $\frac{\partial m_t}{\partial t}<0$. Because the state is fully observable, we first get the close form of $m_t$.
\begin{itemize}[leftmargin=*]
    \item{\verb|Case 1:|} The state $s=0$,
    \begin{equation}
    \begin{aligned}
        P_{0,1}^0 + m_t +\gamma (P_{0,0}^0 V_{m_{t+1},t+1}(0)+P_{0,1}^0 &V_{m_{t+1},t+1}(1))\\
        = P_{0,1}^1+\gamma (P_{0,0}^1 &V_{m_{t+1},t+1}(0)+P_{0,1}^1 V_{m_{t+1},t+1}(1))\\
        \rightarrow m_t =(P_{0,1}^1-P_{0,1}^0)+\gamma(V_{t+1}(0)&(P_{0,0}^1-P_{0,0}^0)+V_{t+1}(1)(P_{0,1}^1-P_{0,1}^0)).
\end{aligned}
\end{equation}
Intuitively, we can have $V_{m_{t},t}(0)>V_{m_{t+1},t+1}(0)$ (see Lemma~\ref{lem:value_decay}), and  $V_{m_{t},t}(1)>V_{m_{t+1},t+1}(1)$, we obtain $\frac{\partial V_{m_{t},t}(0)}{\partial t}<0$ and $\frac{\partial V_{m_{t},t}(1)}{\partial t}<0$. Hence, we can get
\begin{equation}
\frac{\partial m_t}{\partial t}
= \gamma\left((P_{0,0}^1-P_{0,0}^0)\frac{\partial V_{m_{t+1},t+1}(0)}{\partial t}+(P_{0,1}^1-P_{0,1}^0)\frac{\partial V_{m_{t+1},t+1}(1)}{\partial t}\right) <0
\end{equation}
    \item{\verb|Case 2:|} For state $s=1$, similarly, we can get
\begin{equation}
\frac{\partial m_t}{\partial t}
= \gamma\left((P_{1,0}^1-P_{1,0}^0)\frac{\partial V_{m_{t+1},t+1}(0)}{\partial t}+(P_{1,1}^1-P_{1,1}^0)\frac{\partial V_{m_{t+1},t+1}(1)}{\partial t})\right) <0
\end{equation}
\end{itemize}
Thus $\forall t <T: m_t>m_{t+1}\geq m_T = P_{s,1}^1-P_{s,1}^0$.
\end{proof}

\paragraph{Proof of Theorem~\ref{thm:lambda}}
\begin{proof}\label{pf:thm:index_decay}
According to the Equation~\ref{eq:Q(s,a)}, we have
\begin{equation}
\begin{aligned}
  \pi_i(s_i,a_i) &= e^{{Q}_i(s_i,a_i)-V_i(s_i)} = \frac{e^{Q_i(s_i,a)}}{e^{V_i(s_i)}}\\
\end{aligned}
\end{equation}
By replacing this into Equation~\ref{eq:lambda}, we can get
\begin{equation}
\begin{aligned}
    \lambda_i &= \log \pi_i(s_i,a_i=1) - \log \pi_i(s_i,a_i=0)\\
    &=\log\left( \frac{e^{Q_i(s_i,a_i=1)}}{e^{V_i(s_i)}} \right) - \log\left( \frac{e^{Q_i(s_i,a_i=0)}}{e^{V_i(s_i)}}\right) \\
    &=\log \left( \frac{e^{Q_i(s_i,a_i=1)}}{  e^{Q_i(s_i,a_i=0)}} \right)\\
    & = Q_i(s_i, a_i=1) - Q_i(s_i, a_i=0).
\end{aligned}
\end{equation}
Because as $c$ approaches infinity, our algorithm becomes deterministically selecting the arm with the highest value of $\lambda$. Let set $\phi^{\ast}$ to be the set of actions containing the $k$ arms with the highest-ranking of $\lambda$ value, and any $k$ arms that aren't among the top k are included in the set $\phi^{\prime}$.
Let $\phi^{-,\ast}$ and $\phi^{-,\prime}$ denote the set that includes all of the arms except those in set $\phi^{\ast}$ and $\phi^{\prime}$, respectively. Thus the first action vector can be represented as $\mathbf{a}=\mathbb{I}_{\{\phi\}}$, and the latter action vector is $\mathbf{a}^\prime=\mathbb{I}_{\{\phi^{\prime}\}}$. We could have:
\begin{equation}
    \begin{aligned}
    \sum_{i \in \phi^{\ast}} \lambda_i &\geq \sum_{j \in \phi^{\prime}} \lambda_j \\
    \sum_{i \in \phi^{\ast}} \left[ Q(s_i, a_i=1)-Q(s_i, a_i=0)\right] &\geq \sum_{j \in \phi^{\prime}} \left[ Q(s_j, a_i=1)-Q(s_j, a_i=0)\right]\\
    \sum_{i \in \phi^{\ast}} Q(s_i, a_i=1) + \sum_{j \in \phi^{\prime}} Q(s_j, a_i=0) &\geq \sum_{j \in \phi^{\prime}} Q(s_j, a_i=1) + \sum_{i \in \phi^{\ast}} Q(s_i, a_i=0)
    \end{aligned}
\end{equation}
Adding $\underset{z\notin \phi^{\ast} \wedge z\notin \phi^{\prime}}{\sum} Q(s_z, a_i=0)$  on both sides, we can have,
\begin{equation}
    \begin{aligned}
    \sum_{i\in \phi^{\ast}} Q(s_i, a_i=1) + \sum_{j\in \phi^{-,\ast}} Q(s_j, a_i=0) &\geq \sum_{i\in \phi^{\prime}} Q(s_i, a_i=1)
    + \sum_{j\in \phi^{-,\prime}} Q(s_j, a_i=0)\\
    \rightarrow Q(\mathbf{s},\mathbf{a}=\mathbb{I}_{\{\phi\}})&\geq Q(\mathbf{s},\mathbf{a}^\prime=\mathbb{I}_{\{\phi^{\prime}\}})
    \end{aligned}
\end{equation}
Thus from this we can see that selecting action in the action set $\phi^{\ast}$ according to the $\lambda$ value can maximize the cumulative long-term reward, which lead to the optimal state-action value function $Q^\ast(\mathbf{s}, \mathbf{a})$ as well as the optimal value function, $V^\ast(\mathbf{s})$
\end{proof}

\paragraph{Proof of Theorem~\ref{thm:soft_fair}}
\begin{proof}
According to the Equation.~\ref{eq:lambda}, the probability of choosing an arm is the \textit{softmax} function on $\lambda$, which can guarantee that the higher the value of $\lambda$, the higher the probability of selecting that arm.
We can write down the probability of $\pi(\mathbf{s},\mathbf{a}=\mathbb{I}_{\{\phi\}})$, where $\phi$ is the set of selected arms when $k\neq 1$. More specifically, $\pi(\mathbf{s},\mathbf{a}=\mathbb{I}_{\{\phi\}}) = \Pi_{i\in \phi}\Pr(a_i=1|\mathbf{s})$. Note that $\Pr(a_i=1|\mathbf{s}) = softmax_c(c\cdot \lambda_i)$ if $k=1$. Intuitively, when $k\neq 1$, $\Pr(a_i=1|\mathbf{s})$ can be obtained through the brute-force permutation iteration over $\pi(\mathbf{s},\mathbf{a}=\mathbb{I}_{\{i\}})= softmax_c(c\cdot \lambda_i)$. It is easy to conclude that when $k\neq 1$, the higher the value of $\lambda_i$, the higher probability $\Pr(a_i=1|\mathbf{s})$. Formally, this is equivalent to:
\begin{equation}\label{eq:lambda_i_ineq_1}
    \pi(\mathbf{s},\mathbf{a}=\mathbb{I}_{\{\phi\}}) \geq \pi(\mathbf{s},\mathbf{a}=\mathbb{I}_{\{\phi^\prime \}}) \text{ if and only if } \sum_{i \in \phi} \lambda_i \geq \sum_{j \in \phi^\prime} \lambda_j.
\end{equation}
Similar to the proof of Theorem~\ref{pf:thm:index_decay}, we can have:
\begin{equation}\label{eq:lambda_i_ineq_2}
     \sum_{i \in \phi} \lambda_i \geq \sum_{j \in \phi^\prime} \lambda_j\text{ if and only if } Q(\mathbf{s},\mathbf{a}=\mathbb{I}_{\{\phi\}})\geq Q(\mathbf{s},\mathbf{a}^\prime=\mathbb{I}_{\{\phi^{\prime}\}}).
\end{equation}
Thus we have $\pi(\mathbf{s},\mathbf{a}=\mathbb{I}_{\{\phi\}}) \geq \pi(\mathbf{s},\mathbf{a}=\mathbb{I}_{\{\phi^\prime \}})$ if $Q(\mathbf{s},\mathbf{a}=\mathbb{I}_{\{\phi\}})\geq Q(\mathbf{s},\mathbf{a}^\prime=\mathbb{I}_{\{\phi^{\prime}\}})$. Our \textit{SoftFair} is fair under our proposed fairness constraint.

The trade-off is governed by $c$, where a larger $c$ means \textit{SoftFair} tends to choose arms with higher value, while a small $c$ means \textit{SoftFair} tends to ensure fairness among arms. More specifically, we have shown that selecting the top $k$ arms according to the $\lambda$ value at each time step $t$ when $c$ approaches infinity is equivalent to maximizing the cumulative long-term reward (Theorem~\ref{thm:lambda}). When $c$ is close to 0, the difference between $c\cdot\lambda$ is small and \textit{SoftFair} tends to uniformly sample $k$ arms. Therefore \textit{SoftFair} remains fair under our proposed fairness constraints, and the trade-off between fairness and optimal value is controlled by the multiplier parameter $c$.
\end{proof}

\paragraph{Proof of Definition~\ref{eq:fairness2}} \label{pf:pf_fairness2}
We rewrite the two statements that are equal:
\begin{equation}
\begin{aligned}
    \pi_t(\mathbf{s},\mathbf{a}) \geq \pi_t(\mathbf{s}, \mathbf{a}^\prime) &\text{ if and only if } Q^\ast(\mathbf{s},\mathbf{a})\geq Q^\ast(\mathbf{s},\mathbf{a}^\prime)\\
    \lambda_i \geq \lambda_j &\text{ if and only if } \sum_{\phi_i: \phi_i \in \Phi_i}\pi_t(\mathbf{s},\mathbf{a} = \mathbb{I}_{\{ \phi_i \}}) \geq \sum_{\phi_j: \phi_j \in \Phi_j}\pi_t(\mathbf{s},\mathbf{a} = \mathbb{I}_{\{ \phi_j \}})
\end{aligned}
\end{equation}
\begin{proof}
For any set of selected arms, let $\phi_i$ denote a set that always include arm $i$, and $\phi_j$ denote a set that always include arm $j$, and $\phi_{i,j}$ denote a set that always include arm $i$ and arm $j$. And $\Phi_i$ denote the set of $\phi_i$, formally, we have $\Phi_i = \{ \phi_i\}$ and $\Phi_j = \{\phi_j \}$ and $\Phi_{i,j}=\{ \phi_{i,j}\}$.
Similarly, let $\phi_{i,-j}$ denote the a that always include arm $i$ and but not include arm $j$, and $\phi_{j,i}$ denote a set that always include arm $j$ and but not include arm $i$. Thus we have:
\begin{equation}
\begin{aligned}
    \phi_i = \phi_{i,-j} + \phi_{i,j} &\text{ and } \phi_j = \phi_{j,-i} + \phi_{j,i} \\
    \Phi_i = \Phi_{j,-i} + \Phi_{i,j} &\text{ and } \Phi_j = \Phi_{j,-i} + \Phi_{j,i}
\end{aligned}\label{eq:divide}
\end{equation}
We divide this into two terms:
\begin{itemize}
    \item 1st Term: $\Phi_{i,-j}$ and $\Phi_{j,-i}$. We can instead of consider a subset of $k-1$ selected arms which does not include arm $i$ and arm $j$, we denote a subset of arms as $\phi^\prime_{-i,-j} \in \Phi^\prime_{-i,-j}$. Thus $\phi_{i,-j}$ and $\phi_{j,-i}$ can be writing as:
    \begin{equation}
    \begin{aligned}
        \phi_{i,-j} = \{i\} + \phi^\prime_{-i,-j} &\text{ and } \phi_{j,-i} = \{j\} + \phi^\prime_{-i,-j}   \\
        \Phi_{i,-j} = \sum_{\phi^\prime_{-i,-j} \in \Phi^\prime_{-i,-j}}\left[ \{i\} + \phi^\prime_{-i,-j}\right] &\text{ and } \Phi_{j,-i} =\sum_{\phi^\prime_{-i,-j} \in \Phi^\prime_{-i,-j}}\left[ \{j\} + \phi^\prime_{-i,-j} \right]
    \end{aligned}\label{eq:divide_1}
    \end{equation}
    In this case, if $\lambda_i \geq \lambda_j$, we add $\sum_{h\in \phi^\prime_{-i,-j}}\lambda_h$ on both side, which we get:
\begin{equation}
    \begin{aligned}
    \lambda_i &\geq \lambda_j\\
    \lambda_i + \sum_{h\in \phi^\prime_{-i,-j}}\lambda_h &\geq \lambda_j + \sum_{h\in \phi^\prime_{-i,-j}}\lambda_h\\
    \sum_{i \in\phi_{i,-j}}\lambda_i &\geq \sum_{j \in \phi_{j,-i}}\lambda_j\\
    \end{aligned}\label{eq:case_1}
\end{equation}
According to Theorem~\ref{thm:lambda} and Theorem~\ref{thm:soft_fair}, we can conclude that $\pi_t(\mathbf{s},\mathbf{a}=\mathbb{I}_{\{ \phi_{i,-j} \}}) \geq \pi_t(\mathbf{s}, \mathbf{a}^\prime = \mathbb{I}_{\{ \phi_{j,-i} \}})$. This is equal to the first statement that $ \pi_t(\mathbf{s},\mathbf{a}) \geq \pi_t(\mathbf{s}, \mathbf{a}^\prime) \text{ if and only if } Q^\ast(\mathbf{s},\mathbf{a})\geq Q^\ast(\mathbf{s},\mathbf{a}^\prime)$.

Furthermore, because for $\forall \phi^\prime_{-i,-j} \in \Phi^\prime_{-i,-j}$, we have $\lambda_i + \sum_{h\in \phi^\prime_{-i,-j}}\lambda_h \geq \lambda_j + \sum_{h\in \phi^\prime_{-i,-j}}\lambda_h$, and by summation over second line of Equation~\ref{eq:case_1}
like Equation~\ref{eq:divide_1}, we have:
\begin{equation}\label{eq:term1}
    \begin{aligned}
    \lambda_i + \sum_{h\in \phi^\prime_{-i,-j}}\lambda_h &\geq \lambda_j + \sum_{h\in \phi^\prime_{-i,-j}}\lambda_h\\
    \sum_{\phi^\prime_{-i,-j} \in \Phi^\prime_{-i,-j}}  \left\{ \lambda_i + \sum_{h\in \phi_{-i,-j}}\lambda_h \right\}&\geq \sum_{\phi^\prime_{-i,-j} \in \Phi^\prime_{-i,-j}} \left\{ \lambda_j + \sum_{h\in \phi^\prime_{-i,-j}}\lambda_h \right\}\\
    \sum_{\phi_{i,-j} \in \Phi_{i,-j}} \left\{ \sum_{i \in\phi_{i,-j}}\lambda_i \right\}&\geq \sum_{\phi_{j,-i} \in \Phi_{j,-i}} \left\{ \sum_{j \in \phi_{j,-i}}\lambda_j \right\} \text{ (according to Eq.~\ref{eq:divide_1})}\\
    \rightarrow  \sum_{\phi_{i,-j} \in \Phi_{i,-j}}  \left\{ \pi_t(\mathbf{s},\mathbf{a}=\mathbb{I}_{\{ \phi_{i,-j} \}}) \right\}&\geq \sum_{\phi_{j,-i} \in \Phi_{j,-i}} \left\{ \pi_t(\mathbf{s}, \mathbf{a}^\prime = \mathbb{I}_{\{ \phi_{j,-i} \}}) \right\}\\
    \end{aligned}
\end{equation}

\item 2nd Term: $\Phi_{i,j}$ and $\Phi_{j, i}$. Because we can easily see that $\phi_{i,j}=\phi_{j, i}$ and $\Phi_{i,j}=\Phi_{j, i}$, thus we have:
\begin{equation}\label{eq:term2}
    \sum_{\phi_{i,j} \in \Phi_{i,j}}\pi_t(\mathbf{s},\mathbf{a} = \mathbb{I}_{\{ \phi_{i,j} \}}) = \sum_{\phi_{j,i} \in \Phi_{j, i}}\pi_t(\mathbf{s},\mathbf{a} = \mathbb{I}_{\{ \phi_{j, i} \}}).
\end{equation}
\end{itemize}
Overall, according to Equation~\ref{eq:divide}, by summing this two terms (Equation~\ref{eq:term1} and Equation~~\ref{eq:term2}), we have $\lambda_i \geq \lambda_j \text{ if and only if } \sum_{\phi_i: \phi_i \in \Phi_i}\pi_t(\mathbf{s},\mathbf{a} = \mathbb{I}_{\{ \phi_i \}}) \geq \sum_{\phi_j: \phi_j \in \Phi_j}\pi_t(\mathbf{s},\mathbf{a} = \mathbb{I}_{\{ \phi_j \}}) $. And this is equal to the first statement that $ \pi_t(\mathbf{s},\mathbf{a}) \geq \pi_t(\mathbf{s}, \mathbf{a}^\prime) \text{ if and only if } Q^\ast(\mathbf{s},\mathbf{a})\geq Q^\ast(\mathbf{s},\mathbf{a}^\prime)$.

\end{proof}

\paragraph{Proof of Lemma ~\ref{lem:lem}}
\begin{proof}
The upper bound can be obtained by showing that $\forall (\mathbf{s},\mathbf{a})$, state-action value at the $ep-$th iteration are bounded. More specifically,
\begin{equation}\label{eq:upper_bound}
    Q^{ep}(\mathbf{s},\mathbf{a})=\sum_{i=1}^n Q^{ep}_i (s_i,a_i)\leq n \cdot \underset{i}{\sup}Q^{ep}_i (s_i,a_i) = n\sum_{t=0}^{T} \gamma^t R_{max}
\end{equation}
We can prove this Equation~\ref{eq:upper_bound} through induction as follows,
When $t=1$, we start from the definition of our \textit{SoftFair} in Equation~\ref{eq:value} to have
\begin{equation}
    \begin{aligned}
    Q^1(\mathbf{s},\mathbf{a}) &= \Psi_{soft}Q^0(\mathbf{s},\mathbf{a})\\
    &= \underset{\mathbf{a}}{\sum}\Pr(\mathbf{a}|\mathbf{s})\underset{\mathbf{s}^\prime_{t+1}}{\sum} \Pr(\mathbf{s}^\prime_{t+1}|\mathbf{s},\mathbf{a})(R(\mathbf{s},\mathbf{a})+\gamma  V^0(\mathbf{s}^\prime))\\
    &= \underset{\mathbf{a}}{\sum}\Pr(\mathbf{a}|\mathbf{s})\underset{\mathbf{s}^\prime_{t+1}}{\sum} \Pr(\mathbf{s}^\prime_{t+1}|\mathbf{s},\mathbf{a})(R(\mathbf{s},\mathbf{a})+\gamma  \underset{\mathbf{a}^\prime}{\max} Q^0(\mathbf{s}^\prime,\mathbf{a}^\prime))\\
    &\leq R_{max} + \underset{\mathbf{s}^\prime_{t+1}}{\sum} \Pr(\mathbf{s}^\prime_{t+1}|\mathbf{s},\mathbf{a})(\gamma  \underset{\mathbf{a}^\prime}{\max} Q^0(\mathbf{s}^\prime,\mathbf{a}^\prime))\\
    &\leq R_{max} + \gamma \underset{\mathbf{s}^\prime_{t+1}}{\sum} \Pr(\mathbf{s}^\prime_{t+1}|\mathbf{s},\mathbf{a})R_{max}\\
    &=(1+\gamma)R_{max}
    \end{aligned}
\end{equation}
We then assume that $ep=K$, where $K>1$, $Q^{K}(\mathbf{s},\mathbf{a})\leq \sum_{t=0}^{K} \gamma^{t} R_{max}$ holds, then we have
\begin{equation}
    \begin{aligned}
    Q^{K+1}(\mathbf{s},\mathbf{a}) &= \Psi_{soft}Q^K(\mathbf{s},\mathbf{a})\\
    &= \underset{\mathbf{a}}{\sum}\Pr(\mathbf{a}|\mathbf{s})\underset{\mathbf{s}^\prime_{t+1}}{\sum} \Pr(\mathbf{s}^\prime_{t+1}|\mathbf{s},\mathbf{a})(R(\mathbf{s},\mathbf{a})+\gamma  V^K(\mathbf{s}^\prime))\\
    &\leq \underset{\mathbf{a}}{\sum}\Pr(\mathbf{a}|\mathbf{s})\underset{\mathbf{s}^\prime_{t+1}}{\sum} \Pr(\mathbf{s}^\prime_{t+1}|\mathbf{s},\mathbf{a})(R_{max}+\gamma  \underset{\mathbf{a}^\prime}{\max} Q^K(\mathbf{s}^\prime,\mathbf{a}^\prime))\\
    &\leq R_{max} + \gamma\underset{\mathbf{s}^\prime_{t+1}}{\sum} \Pr(\mathbf{s}^\prime_{t+1}|\mathbf{s},\mathbf{a})(  \underset{\mathbf{a}^\prime}{\max} Q^K(\mathbf{s}^\prime,\mathbf{a}^\prime))\\
    &\leq R_{max} + \gamma \underset{\mathbf{s}^\prime_{t+1}}{\sum} \Pr(\mathbf{s}^\prime_{t+1}|\mathbf{s},\mathbf{a}) \sum_{t=0}^{K} \gamma^t R_{max}\\
    &=\sum_{t=0}^{K+1} \gamma^{t} R_{max}
    \end{aligned}
\end{equation}
$R(\mathbf{s},\mathbf{a})\in[R_{min},R_{max}]$ where $R_{max} =n$ as we have $n$ arms, and $R_{min}=0$. Thus the upper bound is $\frac{n}{1-\gamma}$.
Similarly, we can prove the lower bound is 0.  we can conclude that the state-action value function is bounded within $[0,\frac{n}{1-\gamma}]$.
\end{proof}

\paragraph{Proof of Lemma~\ref{lem:bound}}
\begin{proof}
let $\mathbf{a}_{\{ i\}}$ denote the action to select the arm $i$ and remain other arms in passive, which is the shorthand for $\mathbf{a}=\mathbb{I}_{\{ i\}}$. We first sort ${Q(\mathbf{s},\mathbf{a}_{\{ i\}})}$ in the ascending order according to the $\lambda$ value. Assume we get ${Q(\mathbf{s},\mathbf{a}_{\{ 1\}^\prime}^\prime)}\geq \dots \geq {Q(\mathbf{s},\mathbf{a}_{\{ n\}^\prime}^\prime)}$ after sorting, and corresponding $\Pr(\cdot|\mathbf{s})$ becomes
$[\Pr(\mathbf{a}=\mathbb{I}_{\{1\}^\prime}|\mathbf{s}), \dots,\Pr(\mathbf{a}=\mathbb{I}_{\{n\}^\prime}|\mathbf{s})]^\top$.
According to Equation~\ref{eq:lambda}, when $k=1$, we have $\Pr(a_i=1|\mathbf{s})=\textit{softmax}_c ( c\cdot \lambda_i)$

Then $\forall Q$ and $\forall s$, we can get
\begin{equation}\label{eq:bound}
\begin{aligned}
    &\underset{\mathbf{a}}{\max} Q(\mathbf{s},\mathbf{a}) - (\Pr(\cdot|\mathbf{s}))^\top Q(\mathbf{s},\cdot)=\underset{\mathbf{a}}{\max} Q(\mathbf{s},\mathbf{a}) - \textit{softmax}_c^\top(c\cdot \lambda_{i^\prime})) Q(\mathbf{s},\cdot) \\
    =&Q(\mathbf{s},\mathbf{a}_{\{ 1\}^\prime}^\prime)) - \frac{\sum_{i=1}^n \exp[c\cdot \lambda_{i^\prime}]\cdot Q(\mathbf{s},\mathbf{a}_{\{ i\}^\prime}^\prime)}{\sum_{i=1}^n \exp[c\cdot \lambda_{i^\prime}]}\\
    =&\frac{\sum_{i=1}^n \exp[c\cdot \lambda_{i^\prime}]\cdot [Q(\mathbf{s},\mathbf{a}_{\{ 1\}^\prime}^\prime)-Q(\mathbf{s},\mathbf{a}_{\{ i\}^\prime}^\prime)]}{\sum_{i=1}^n \exp[c\cdot \lambda_{i^\prime}]}
\end{aligned}
\end{equation}

According to Equation~\ref{eq:Q(s,a)}, we can get $\lambda_{i^\prime} = Q(s_{i^\prime}, a_{i^\prime}=1)-Q(s_{i^\prime}, a_{i^\prime}=0)$.
Let $K=\exp [\sum_{j=1}^n Q(s_j, a_j=0)]$, and we have
\begin{equation}\label{eq:bound_3}
    K\cdot \exp[c\cdot \lambda_{i^\prime}] =\exp[ Q(s_{i^\prime}, a_{i^\prime}=1) + \sum_{j \in \phi ^{-.i^\prime} }Q(s_j, a_j=0)] = \exp[ Q(\mathbf{s},\mathbf{a}_{\{ i\}^\prime}^\prime)].
\end{equation}
Here $\phi ^{-.i^\prime}$ denote the set that include all of the arms except arm $i^\prime$. Thus by applying Equation~\ref{eq:bound_3} to Equation~\ref{eq:bound}, we have
\begin{equation}\label{eq:bound_2}
\begin{aligned}
    &\frac{\sum_{i=1}^n \exp[c\cdot \lambda_{i^\prime}]\cdot [Q(\mathbf{s},\mathbf{a}_{\{ 1\}^\prime}^\prime)-Q(\mathbf{s},\mathbf{a}_{\{ i\}^\prime}^\prime)]}{\sum_{i=1}^n \exp[c\cdot \lambda_{i^\prime}]} \\
    =&\frac{ K \cdot \sum_{i=1}^n \exp[c\cdot \lambda_{i^\prime}]\cdot [Q(\mathbf{s},\mathbf{a}_{\{ 1\}^\prime}^\prime)-Q(\mathbf{s},\mathbf{a}_{\{ i\}^\prime}^\prime)]}{K \cdot \sum_{i=1}^n \exp[c\cdot \lambda_{i^\prime}]} \\
    =&\frac{\sum_{i=1}^n \exp[c\cdot Q(\mathbf{s},\mathbf{a}_{\{ i\}^\prime}^\prime)]\cdot[Q(\mathbf{s},\mathbf{a}_{\{ 1\}^\prime}^\prime)-Q(\mathbf{s},\mathbf{a}_{\{ i\}^\prime}^\prime)]}{\sum_{i=1}^n \exp[c\cdot Q(\mathbf{s},\mathbf{a}_{\{ i\}^\prime}^\prime)]}\\
\end{aligned}
\end{equation}

Let $\delta_{\{ i\}^\prime}(\mathbf{s})=Q(\mathbf{s},\mathbf{a}_{\{ 1\}^\prime}^\prime)-Q(\mathbf{s},\mathbf{a}_{\{ i\}^\prime}^\prime)$, and we have $\delta_{\{ i\}^\prime}(\mathbf{s})\geq 0$, and $\delta_{\{ 1\}^\prime}(\mathbf{s})=0$.
Using Equation~\ref{eq:bound_2}, we have

\begin{equation}\label{eq:bound2}
    \begin{aligned}
    &\frac{\sum_{i=1}^n \exp[c\cdot Q(\mathbf{s},\mathbf{a}_{\{ i\}^\prime}^\prime)]\cdot[Q(\mathbf{s},\mathbf{a}_{\{ 1\}^\prime}^\prime)-Q(\mathbf{s},\mathbf{a}_{\{ i\}^\prime}^\prime)]}{\sum_{i=1}^n \exp[c\cdot Q(\mathbf{s},\mathbf{a}_{\{ i\}^\prime}^\prime)]}\\
    =&\frac{\sum_{i=1}^n \exp \left[c\cdot \left(Q(\mathbf{s},\mathbf{a}_{\{ 1\}^\prime}^\prime)-\delta_{\{ i\}^\prime}(\mathbf{s})\right)\right]\cdot\delta_{\{ i\}^\prime}(\mathbf{s})}{\sum_{i=1}^n \exp \left[c\cdot \left(Q(\mathbf{s},\mathbf{a}_{\{ 1\}^\prime}^\prime)-\delta_{\{ i\}^\prime}(\mathbf{s})\right)\right]}\\
    =&\frac{\sum_{i=1}^n \exp[-c \cdot \delta_{\{ i\}^\prime}(\mathbf{s})]\cdot \delta_{\{ i\}^\prime}(\mathbf{s})}{\sum_{i=1}^{n}\exp[-c \cdot \delta_{\{ i\}^\prime}(\mathbf{s})]}\\
    =&\frac{\sum_{i=2}^n \exp[-c \cdot \delta_{\{ i\}^\prime}(\mathbf{s})]\cdot \delta_{\{i \}^\prime}(\mathbf{s})}{1+\sum_{i=2}^{n}\exp[-c \cdot \delta_{\{ i\}^\prime}(\mathbf{s})]}
    \end{aligned}
\end{equation}
Now from equation~\ref{eq:bound2}, we can derive the upper bound.
We follow the work in~\cite{song2019revisiting}, we take advantage of the fact that for any two non-negative sequences $\{ x_i\}$ and $\{y_i \}$,
\begin{equation}\label{eq:fact}
    \frac{\sum_i x_i}{1+\sum_i y_i}\leq \sum_i \frac{x_i}{1+y_i}
\end{equation}
Apply Equation~\ref{eq:fact} to Equation~\ref{eq:bound2}
\begin{equation}
    \begin{aligned}
    \frac{\sum_{i=2}^n \exp[-c \cdot \delta_{\{i \}^\prime}(\mathbf{s})]\cdot \delta_{\{i \}^\prime}(\mathbf{s})}{1+\sum_{i=2}^{n}\exp[-c \cdot \delta_{\{i \}^\prime}(\mathbf{s})]}&\leq \sum_{i=2}^n \frac{\exp[-c \cdot \delta_{\{i \}^\prime}(\mathbf{s})]\cdot \delta_{\{i \}^\prime}(\mathbf{s})}{1+\exp[-c \cdot \delta_{\{i \}^\prime}(\mathbf{s})]}\\
    &=\sum_{i=2}^n\frac{\delta_{\{i \}^\prime}(\mathbf{s})}{1+\exp[c\cdot \delta_{\{i \}^\prime}(\mathbf{s})]}
    \end{aligned}
\end{equation}\label{eq:bound3}
Intuitively, we have $0\leq \delta_{\{i \}^\prime}(\mathbf{s})\leq 1$, thus through using Taylor series, we can rewrite Equation~\ref{eq:bound3} as
\begin{equation}
\begin{aligned}
    &\sum_{i=2}^n\frac{\delta_{\{i \}^\prime}(\mathbf{s})}{1+\exp[c\cdot \delta_{\{i \}^\prime}(\mathbf{s})]}\\
    =&\sum_{i=2}^n\frac{\delta_{\{i \}^\prime}(\mathbf{s})}{1+1+c\cdot \delta_{\{i \}^\prime}(\mathbf{s}) + 0.5c^2 \cdot \delta_{\{i \}^\prime}(\mathbf{s})^2+\dots}\\
    \leq&\sum_{i=2}^n \frac{1}{2+c} =\frac{n-1}{2+c}
\end{aligned}
\end{equation}
We can also get the lower bound as
\begin{equation}
\begin{aligned}
    &\frac{\sum_{i=2}^n \exp[-c \cdot \delta_{\{i \}^\prime}(\mathbf{s})]\cdot \delta_{\{i \}^\prime}(\mathbf{s})}{1+\sum_{i=2}^{n}\exp[-c \cdot \delta_{\{ \}^\prime}(\mathbf{s})]}\geq \frac{\sum_{i=2}^n \exp[-c \cdot \delta_{\{i \}^\prime}(\mathbf{s})]\cdot \delta_{\{i \}^\prime}(\mathbf{s})}{n}\\
    &=\frac{\sum_{i=2}^n \delta_{\{i \}^\prime}(\mathbf{s})}{n\exp[c \cdot \delta_{\{i \}^\prime}(\mathbf{s})]}\geq\frac{\sum_{i=2}^n  \delta_{\{ i\}^\prime}(\mathbf{s})}{n\exp[c \cdot \delta(\mathbf{s})]}
    \geq\frac{\delta(\mathbf{s})}{n\exp[c \cdot \delta(\mathbf{s})]}
\end{aligned}
\end{equation}

\end{proof}

\paragraph{Proof of Theorem~\ref{thm:soft_bound}}
\begin{proof}
The proof is similar to the work by~\citet{song2019revisiting} but modified for our RMAB setting.
\paragraph{Upper bound}: We derive the upper bound through induction. We start from the definition for $\Psi$ and $\Psi_{soft}$ in Equation~\ref{eq:Psi} and Equation~\ref{eq:Psi_soft}. When $ep=1$, we have
\begin{equation}
    \begin{aligned}
    &\Psi Q^{0}(\mathbf{s},\mathbf{a})-\Psi_{soft} Q^{0}(\mathbf{s},\mathbf{a})\\
    =&\gamma\underset{\mathbf{s}^\prime}{\sum} \Pr(\mathbf{s}^\prime|\mathbf{s},\mathbf{a})[\underset{\mathbf{a}^\prime}{\max} Q^{0}(\mathbf{s}^\prime,\mathbf{a}^\prime) - Pr(\mathbf{a}^\prime|s^\prime)Q^{0}(\mathbf{s}^\prime,\mathbf{a}^\prime)]\\
    \geq& 0
    \end{aligned}
\end{equation}
Assume $\Psi Q^{K}(\mathbf{s},\mathbf{a})-\Psi_{soft} Q^{K}(\mathbf{s},\mathbf{a})\geq 0$ holds for $ep=K$ where $K>1$. When $ep=K+1$, we have
\begin{equation}
    \begin{aligned}
    &\Psi Q^{K+1}(\mathbf{s},\mathbf{a})-\Psi_{soft} Q^{K+1}(\mathbf{s},\mathbf{a})\\
    =&\Psi \Psi^K Q^{0}(\mathbf{s},\mathbf{a}) - \Psi_{soft} \Psi_{soft}^K Q^{0}(\mathbf{s},\mathbf{a})\\
    \geq&\Psi \Psi^K_{soft} Q^{0}(\mathbf{s},\mathbf{a}) - \Psi_{soft} \Psi_{soft}^K Q^{0}(\mathbf{s},\mathbf{a})\\
    \geq &0
    \end{aligned}
\end{equation}

Since $\underset{ep\rightarrow \infty}{\lim}{\Psi^K}Q^{0}(\mathbf{s},\mathbf{a}) = Q^{\ast}(\mathbf{s},\mathbf{a})$, where $Q^{\ast}(\mathbf{s},\mathbf{a})$ is the optimal state action value, and thus $\underset{ep\rightarrow \infty}{\lim}{\Psi^K}V^{0}(\mathbf{s}) = V^{\ast}(\mathbf{s})$, where $V^{\ast}(\mathbf{s})$ is the optimal value function. Therefore, we have $\underset{ep\rightarrow \infty}{\lim \sup}{\Psi^K}Q^{0}(\mathbf{s},\mathbf{a})\leq Q^{\ast}(\mathbf{s},\mathbf{a})$ and $\underset{ep\rightarrow \infty}{\lim \sup}{\Psi^K}V^{0}(\mathbf{s})\leq V^{\ast}(\mathbf{s})$.

\paragraph{Lower bound}: Similarly, we derive the lower bound through induction from the definition for $\Psi$ and $\Psi_{soft}$ in Equation~\ref{eq:Psi} and Equation~\ref{eq:Psi_soft}. When $ep=1$, we have
\begin{equation}
    \begin{aligned}
    &\Psi Q^{0}(\mathbf{s},\mathbf{a})-\Psi_{soft} Q^{0}(\mathbf{s},\mathbf{a})\\
    =&\gamma\underset{\mathbf{s}^\prime}{\sum} \Pr(\mathbf{s}^\prime|\mathbf{s},\mathbf{a})[\underset{\mathbf{a}^\prime}{\max} Q^{0}(\mathbf{s}^\prime,\mathbf{a}^\prime) - Pr(\mathbf{a}^\prime|s^\prime)Q^{0}(\mathbf{s}^\prime,\mathbf{a}^\prime)]\\
    \leq& \gamma \underset{\mathbf{s}^\prime}{\sum} \Pr(\mathbf{s}^\prime|\mathbf{s},\mathbf{a}) \eta = \gamma \eta = \gamma \frac{n-1}{2+c}
    \end{aligned}
\end{equation}
where $\eta$ is the upper bound of $\underset{\mathbf{a}^\prime}{\max} Q^{0}(\mathbf{s}^\prime,\mathbf{a}^\prime) - Pr(\mathbf{a}^\prime|\mathbf{s}^\prime)Q^{0}(\mathbf{s}^\prime,\mathbf{a}^\prime)$, can be written as $\eta = \underset{\mathbf{s}^\prime}{\sup}[\underset{\mathbf{a}^\prime}{\max} Q^{0}(\mathbf{s}^\prime,\mathbf{a}^\prime) - Pr(\mathbf{a}^\prime|\mathbf{s}^\prime)Q^{0}(\mathbf{s}^\prime,\mathbf{a}^\prime)]$. According to Lemma~\ref{lem:bound}, we have $\eta = \frac{n-1}{2+c}$.
Then we assume $\Psi Q^{K}(\mathbf{s},\mathbf{a})-\Psi_{soft} Q^{K}(\mathbf{s},\mathbf{a})\leq \sum_{i=1}^K \gamma^i \frac{n-1}{2+c}$ holds for $ep=K$ where $K>1$. When $ep=K+1$, we have
\begin{equation}
    \begin{aligned}
    &\Psi Q^{K+1}(\mathbf{s},\mathbf{a})-\Psi_{soft} Q^{K+1}(\mathbf{s},\mathbf{a})\\
    =&\Psi \Psi^K Q^{0}(\mathbf{s},\mathbf{a}) - \Psi_{soft} \Psi_{soft}^K Q^{0}(\mathbf{s},\mathbf{a})\\
    \leq & \Psi\left( \Psi_{soft}^K Q^{0}(\mathbf{s},\mathbf{a}) +  \sum_{i=1}^K \gamma^i \frac{n-1}{2+c} \right) - \Psi_{soft} \Psi_{soft}^K Q^{0}(\mathbf{s},\mathbf{a})\\
    =&\sum_{i=2}^{K+1} \gamma^i \frac{n-1}{2+c} + (\Psi-\Psi_{soft}) \Psi_{soft}^K Q^{0}(\mathbf{s},\mathbf{a})\\
    \leq &\sum_{i=2}^{K+1} \gamma^i \frac{n-1}{2+c} + \gamma^i \frac{n-1}{2+c}\\
    =&\sum_{i=1}^{K+1} \gamma^i \frac{n-1}{2+c}
    \end{aligned}
\end{equation}
Thus we have
\begin{equation}
    \underset{K\rightarrow \infty}{\lim}\Psi Q^{K}(\mathbf{s},\mathbf{a})-\Psi_{soft} Q^{K}(\mathbf{s},\mathbf{a})\leq \underset{K\rightarrow \infty}{\lim} \sum_{i=1}^{K} \gamma^i \frac{n-1}{2+c} = \frac{n-1}{(2+c)(1-\gamma)}
\end{equation}

\end{proof}

\section{Datasets}
\label{app_sec:dataset}
\paragraph{Realistic dataset}: Obstructive sleep apnea (OSA) is a common disorder, and CPAP is considered the gold standard of treatment, effectively addressing a range of adverse outcomes. However, effectiveness is often limited by non-adherence. CPAP adherence varies substantially across settings; 29\% to 83\% of patients enrolled in the study reported using CPAP recommended hours per night~\cite{weaver2008adherence}.

Our CPAP dataset is provided by~\citet{kang2013markov}, they define states as the CPAP usage levels (0: did not adhere, 1: adhere), and estimate transition probabilities between CPAP usage levels to build a Markov chain. They divide patients into two groups where patients in the first cluster exhibit ‘adherent’ behavior, they are most likely to transition to and remain in a good CPAP usage state, while the ‘non-adherent’ patient type we consider in the second cluster shows a weak trend to transition to good CPAP usage state.

We consider an intervention effect broadly characterizing supportive interventions such as text message reminders, telemonitoring and telephone support, which is associated with a random increase from 5-50\% to good CPAP use status per night for both two groups. We add a small noise to each transition matrix so that the dynamics of each individual arm in same group is different.
The initial state vector $\mathbf{s}$ is randomly assigned.

\paragraph{Synthetic dataset}: For the synthetic dataset, it is natural to require strict positive transition matrix entries, and in order to mimic the real-world setting, following~\cite{mate2020collapsing}, we impose four structural constraints: $(\romannumeral1)$ $P_{0,1}^0<P_{1,1}^0$, $(\romannumeral2)$ $P_{0,1}^1<P_{1,1}^1$, $(\romannumeral3)$ $P_{0,1}^0<P_{0,1}^1$ and $(\romannumeral4)$ $P_{1,1}^0<P_{1,1}^1$. Those constraints imply that arms are more likely to stay in a good state when there is positive intervention involved (active action) compared to no intervention (passive action).

\end{document}